\tikzset{
state/.style={
       rectangle split,
       rectangle split parts=2,
       rectangle split part fill={red!30,blue!20},
       rounded corners,
       draw=black, very thick,
       minimum height=2em,
       text width=3cm,
       inner sep=2pt,
       text centered,
       }
}
\newtheorem{prop}{Proposition}
\newcommand{\y}{\mathbf{y}}
\newcommand{\x}{\mathbf{x}}
\newcommand{\z}{\mathbf{z}}
\DeclareMathOperator{\Var}{\widehat{Var}}
\newcommand{\errpm}[2]{#1 {\scriptsize $\pm$ #2}}
\newcommand{\ELBO}{\mathcal{L}}
\newcommand{\ELBOK}{\mathcal{L}^{>k}}
\newcommand{\LLRK}{\mathbf{\mathcal{L}\mathcal{L}\mathcal{R}}^{>k}}
\DeclareMathOperator*{\argmax}{arg\,max}
\tikzstyle{latent} = [circle,fill=white,draw=black,inner sep=1pt,
\tikzstyle{obs} = [latent,fill=gray!25]
\tikzstyle{const} = [rectangle, inner sep=0pt, node distance=1]
\tikzstyle{factor} = [rectangle, fill=black,minimum size=5pt, inner
\tikzstyle{det} = [latent, diamond]
\tikzstyle{plate} = [draw, rectangle, rounded corners, fit=#1]
\tikzstyle{wrap} = [inner sep=0pt, fit=#1]
\tikzstyle{gate} = [draw, rectangle, dashed, fit=#1]
\tikzstyle{caption} = [font=\footnotesize, node distance=0] %
\tikzstyle{plate caption} = [caption, node distance=0, inner sep=0pt,
\tikzstyle{factor caption} = [caption] %
\tikzstyle{every label} += [caption] %
\tikzset{>={triangle 45}}
\renewcommand{\edge}[3][]{ %
  \foreach \x in {#2} { %
    \foreach \y in {#3} { %
      \path (\x) edge [->,#1] (\y) ;%
    } ;
  } ;
}
\title{Optimizing Latent Dimension Allocation in Hierarchical
VAEs:
Balancing Attenuation and Information Retention for OOD
Detection}
\author[UVA]{Dane Williamson}
\author[UVA]{Yangfeng Ji}
\author[UVA]{Matthew Dwyer}
\address[UVA]{University of Virginia}
\begin{document}
\begin{frontmatter}

% \maketitle
\begin{abstract}
Out-of-distribution (OOD) detection is a critical task in machine learning, particularly for safety-critical applications where unexpected inputs must be reliably flagged. While hierarchical variational autoencoders (HVAEs) offer improved representational capacity over traditional VAEs, their performance is highly sensitive to how latent dimensions are distributed across layers. Existing approaches often allocate latent capacity arbitrarily, leading to ineffective representations or posterior collapse. In this work, we introduce a theoretically grounded framework for optimizing latent dimension allocation in HVAEs, drawing on principles from information theory to formalize the trade-off between information loss and representational attenuation. We prove the existence of an optimal allocation ratio $r^{\ast}$ under a fixed latent budget, and empirically show that tuning this ratio consistently improves OOD detection performance across datasets and architectures. Our approach outperforms baseline HVAE configurations and provides practical guidance for principled latent structure design, leading to more robust OOD detection with deep generative models.
\end{abstract}
    
\end{frontmatter}
\section{Introduction}
\label{sec:intro}
\par Out-of-distribution (OOD) detection plays a crucial role in many real-world scenarios by identifying data points that deviate from the norm, potentially indicating fraud, faults, or invalid data. 
Traditional methods \citep{liu2020energyOOD, chang2020godin, song2022rankFeat, hendrycks2016baseline} often rely on statistical thresholds or predefined rules, which may not capture complex anomalies effectively.
On the other hand, generative models, particularly Variational Autoencoders (VAEs) \citep{kingma2013vae}, have shown promising performance in capturing complex data distributions and generating realistic samples. 

\par One significant advancement along this line is the use of hierarchical VAEs (HVAEs) for out-of-distribution detection, leveraging their ability to detect anomalies in the latent space representations of data.
Hierarchical Variational Autoencoders (HVAEs) encode data into a structured hierarchy of latents that reflects the underlying generation process \citep{vahdat2021nvae, sonderby2016laddervae}. This multi-level structure allows HVAEs to capture data distributions at varying levels of abstraction, offering significant advantages for out-of-distribution (OOD) detection compared to traditional VAEs.

\par A critical challenge for Deep Generative Models (DGMs) is their potential over-reliance on features that may not be semantically meaningful, such as color or texture differences between images \citep{havtorn2021hvaes}. This can undermine OOD detection performance. HVAEs mitigate this issue by leveraging semantically relevant features across the hierarchical latent structure, making them a powerful tool for robust OOD detection. 

\par By modeling global patterns, like class distinctions, in higher-level latents and finer details, such as texture or lighting, in lower-level latents, HVAEs disentangle semantically meaningful variations. This tradeoff between abstraction and noise is clearly visualized in ~\autoref{fig:reconstructed_images}, which illustrates how latent configuration impacts semantic quality in reconstructed samples. This enables the model to perform density estimation based on these meaningful features, improving sensitivity to OOD deviations \citep{ren2019llrood, nalisnick2019dodeep}.

\par The hierarchical design of HVAEs provides flexibility in balancing model capacity and generalization, which is crucial when handling diverse or complex data distributions.

\par Motivated by the demonstrated potential of hierarchical VAEs for OOD detection \citep{havtorn2021hvaes, li2022adaptiveratio}, our work seeks to enhance their performance. Despite their promise, HVAEs face two persistent challenges. First, they often exhibit \textit{suboptimal latent space representations}, where deeper latent variables become decoupled from the input data, a form of posterior collapse \citep{bowman2015generatingsentences, kingma2016improvingvi, xi2016vlossae, dieng2019skipposterior, sonderby2016laddervae}. These are measured both theoretically, via diminished mutual information $I(X; Z_i)$ between latent variables and the input, and empirically, by degraded performance on OOD detection metrics. Second, HVAEs suffer from \textit{limited scalability} in architecture design: their performance depends heavily on how latent capacity is distributed across layers, but prior work offers little guidance on this front \citep{brock2019largescalegan, ho2020ddpm}. We address both issues by introducing a theoretically grounded allocation scheme that preserves semantically meaningful information across the hierarchy while avoiding overcompression.

\begin{SCfigure}[50][ht]
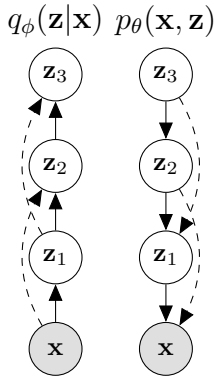

    \tikz{
        \node[obs] (x_inf) {$\x$};%
        \node[latent,above=.5cm of x_inf](z1_inf){$\z_1$}; %
        \node[latent,above=.5cm of z1_inf](z2_inf){$\z_2$}; %
        \node[latent,above=.5cm of z2_inf](z3_inf){$\z_3$}; %
        \node[above=of z3_inf, yshift=-1.cm] (phi) {$q_\phi(\z|\x)$}; 
        
        \edge[]{x_inf}{z1_inf};
        \edge[]{z1_inf}{z2_inf};
        \edge[]{z2_inf}{z3_inf};
        \edge[dashed, bend left]{x_inf}{z2_inf};
        \edge[dashed, bend left]{z1_inf}{z3_inf};
        
        \node[obs,right=0.75cm of x_inf] (x_gen) {$\x$};%
        \node[latent,above=.5cm of x_gen](z1_gen){$\z_1$}; %
        \node[latent,above=.5cm of z1_gen](z2_gen){$\z_2$}; %
        \node[latent,above=.5cm of z2_gen](z3_gen){$\z_3$}; %
        \node[above=of z3_gen, yshift=-1.cm] (theta) {$p_\theta(\x,\z)$}; 
        
        \edge[]{z2_gen}{z1_gen};
        \edge[]{z1_gen}{x_gen};
        \edge[]{z3_gen}{z2_gen};
        \edge[dashed, bend left]{z2_gen}{x_gen};
        \edge[dashed, bend left]{z3_gen}{z1_gen};
    }
    \vspace{-5mm}
    \caption{The encoder, $q_{\phi} (z \vert x)$,  and decoder, $p_{\theta}(x, z)$, models for a 3 level ($L$) HVAE. Solid lines represent stochastic dependencies in the data generation process, dashed lines indicate skip connections which assist in alleviating posterior collapse.}
    \label{fig:hvae_standard}
    \vspace{5mm}
\end{SCfigure}
\vspace{-10pt}

\par In particular, we introduce a structured allocation bias that encourages meaningful information flow through all latent levels, which can help reduce the severity of posterior collapse. We empirically demonstrate that tuning the relative dimensionality of latent layers, under a fixed capacity budget, is associated with improved latent utilization and enhanced OOD sensitivity, without increasing model size.

\par Despite extensive work on generative models for OOD detection, there has been limited focus on selecting appropriate latent architectures or investigating how structural design impacts model performance. Prior work \citep{havtorn2021hvaes, li2022adaptiveratio, maaloe2019biva} does not explicitly address latent allocation strategies, and existing regularization techniques \citep{dieng2021consistencyregularizationvae, bozkurt2021evaluatingcombvae, xu2020learningautoencoders, ma2018constrained, wu2020vector} are not tailored for improving OOD detection.

\par In this paper, we propose a theoretically motivated framework for optimizing latent dimension allocation in HVAEs, grounded in the Information Bottleneck principle. By formalizing the trade-off between representational attenuation and information loss, we design allocation strategies that enhance the robustness and effectiveness of OOD detection models. The main contributions of this paper are:
\begin{itemize}
    \item We introduce a closed-form latent allocation strategy for HVAEs that distributes a fixed dimensionality budget across the hierarchy using a geometric progression, and we prove the existence of an optimal reduction ratio $r^{\ast}$.
    \item We provide theoretical analysis linking latent dimensionality with generalization bounds and mutual information, showing that suboptimal allocations result in either overcompression or attenuation.
    \item We empirically validate our approach across diverse datasets, demonstrating that tuning $r$ significantly improves OOD detection metrics (AUROC, AUPRC, FPR80, FPR95) over baseline configurations.
    \item We show that the optimal configuration identified by $r^{\ast}$ generalizes across OOD pairings for a given in-distribution dataset, offering a principled guideline for HVAE architecture design.
\end{itemize}

\par We substantiate these contributions through theoretical analysis in ~\autoref{sec:approach_latent_dimension_ratio} and \autoref{sec:tradeoff_latent_dimension_ratio}, and empirical validation in ~\autoref{sec:results}.

\begin{figure}[t]
    \centering
    % Row for Original Images
    \begin{subfigure}[t]{0.15\textwidth}
        \centering
        \includegraphics[width=\textwidth]{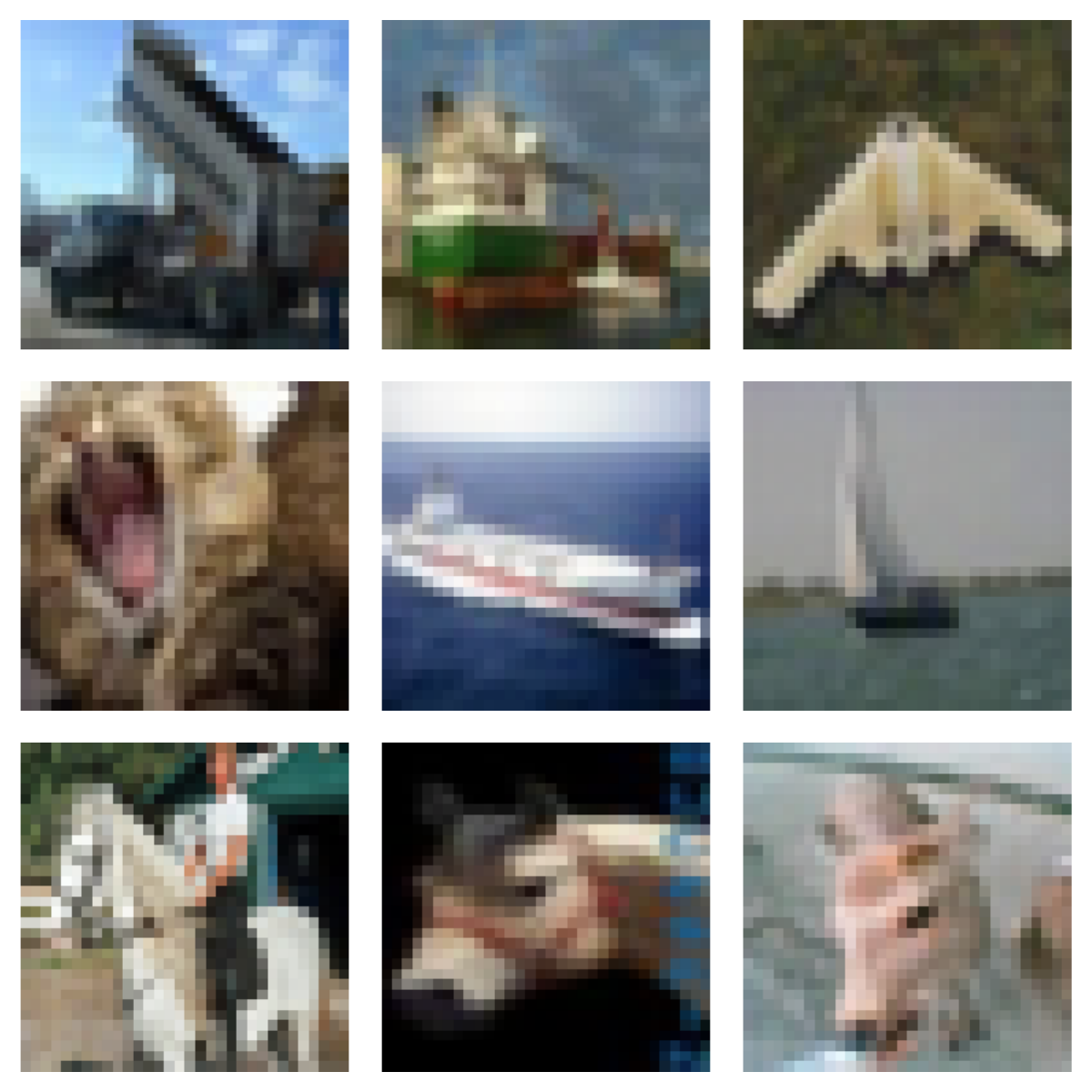}
        \caption{\shortstack{CIFAR10 Images;  \\ 224 Latent Dimensions}}
        \label{fig:cifar10_full}
    \end{subfigure}
    \hfill
    % Row for Configuration 1
    \begin{subfigure}[t]{0.15\textwidth}
        \centering
        \includegraphics[width=\textwidth]{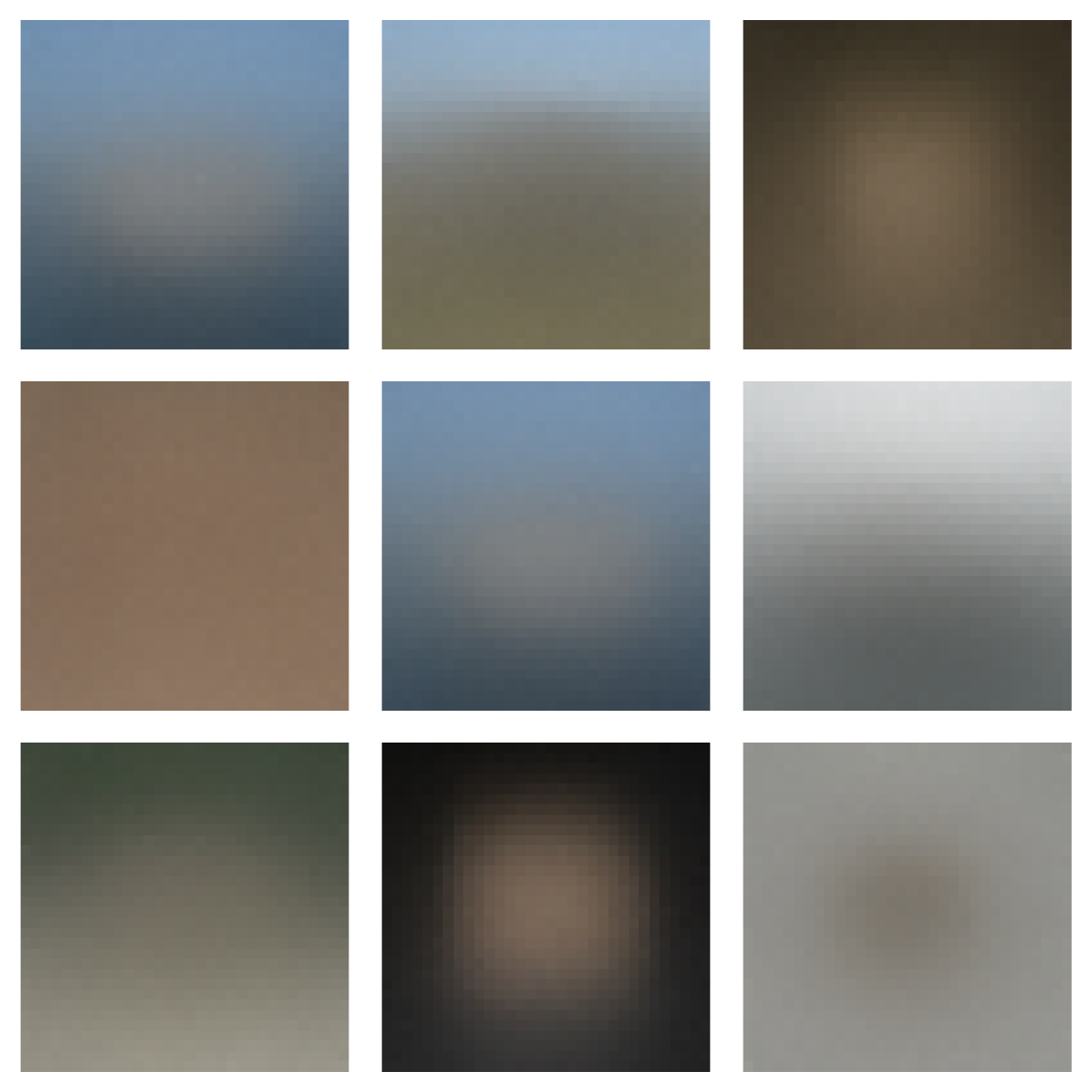}
        \caption{$r=0.1$  \shortstack{\\ 202 $\rightarrow$ 20 $\rightarrow$ 2}}
        \label{fig:cifar10_r_1}
    \end{subfigure}
    \hfill
    % Row for Configuration 2
    \begin{subfigure}[t]{0.15\textwidth}
        \centering
        \includegraphics[width=\textwidth]{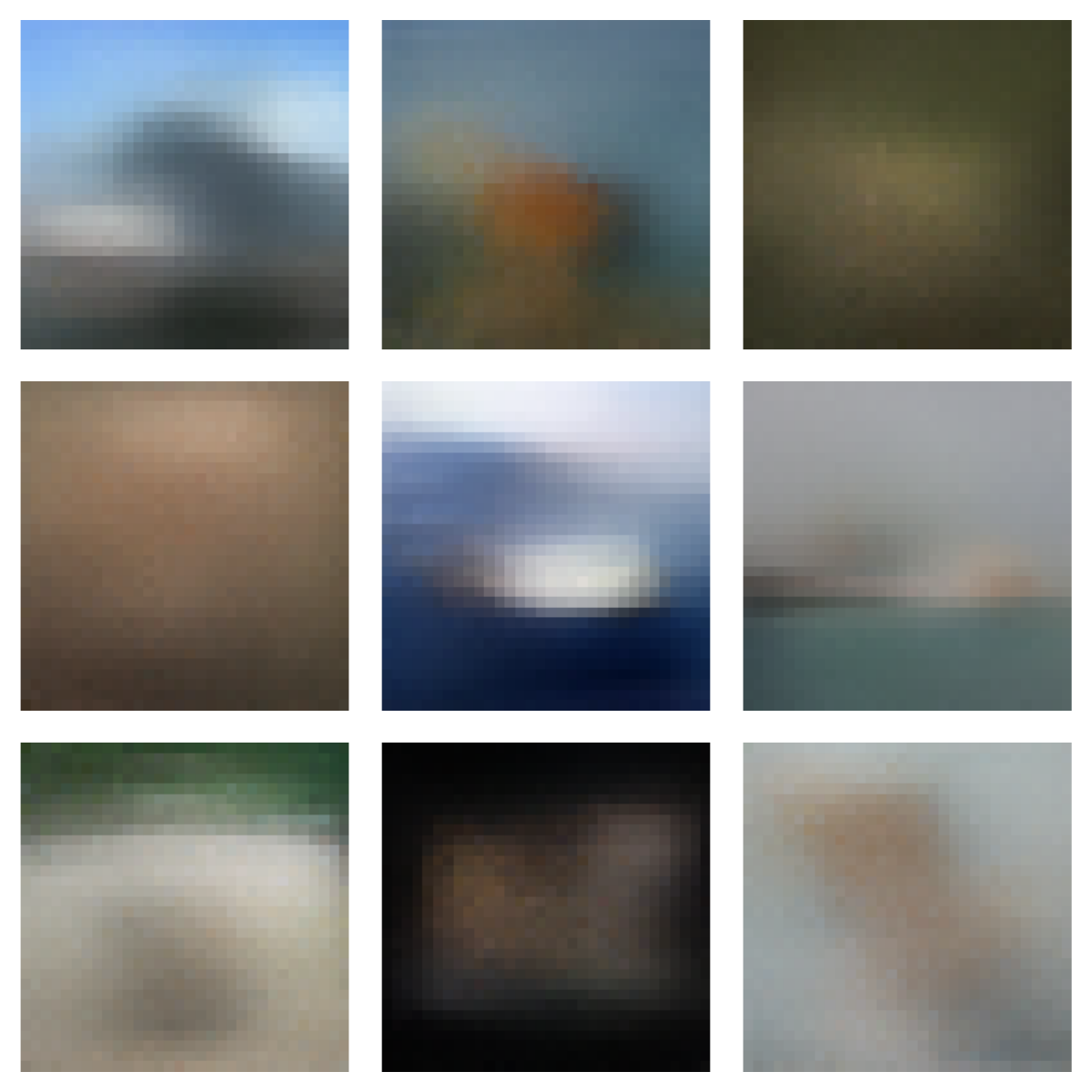}
        \caption{$r=0.25$  \shortstack{\\ 170 $\rightarrow$ 43 $\rightarrow$ 11}}
        \label{fig:cifar10_r_25}
    \end{subfigure}
    \hfill
    % Row for Configuration 3
    \begin{subfigure}[t]{0.15\textwidth}
        \centering
        \includegraphics[width=\textwidth]{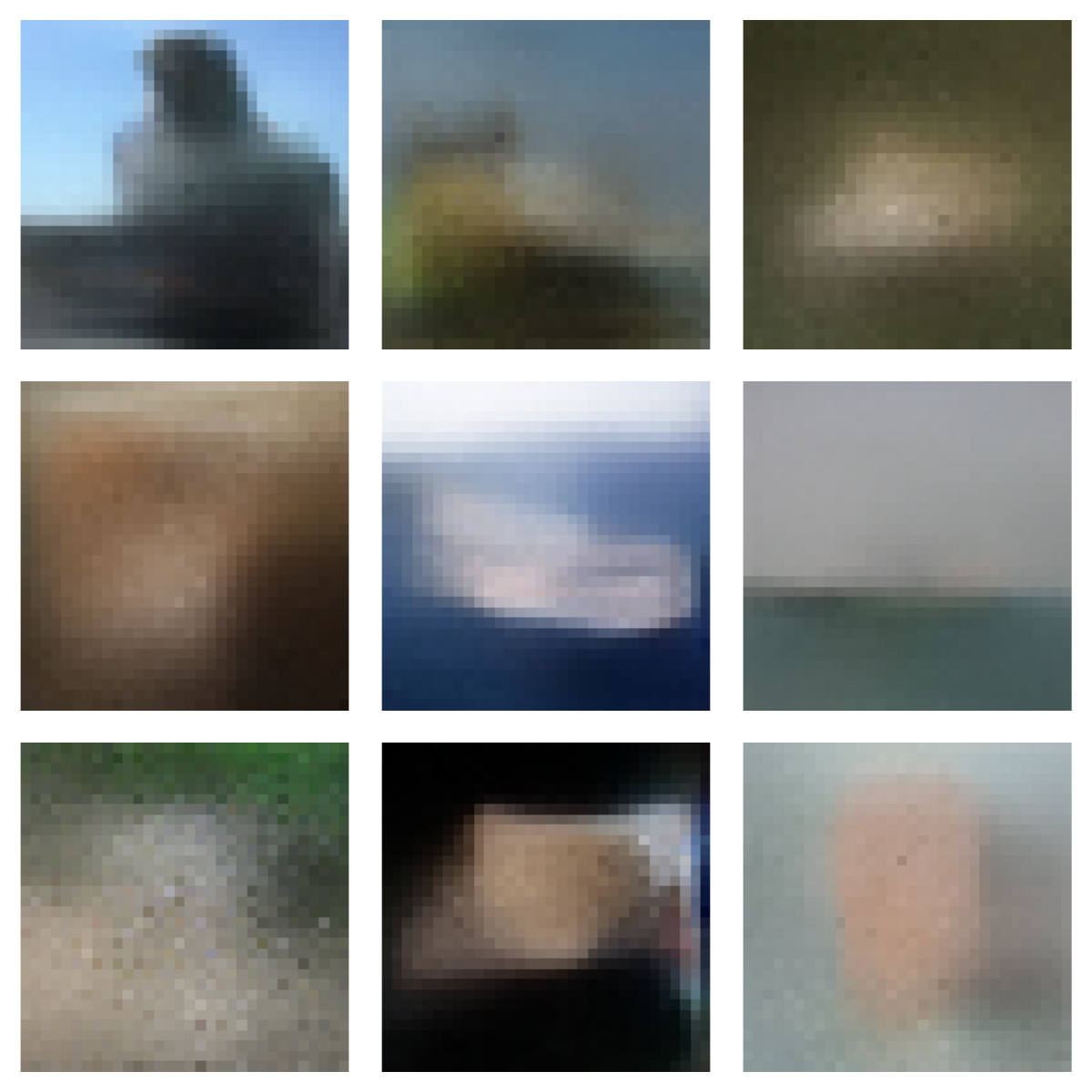}
        \caption{$r=0.5$  \shortstack{\\ 128 $\rightarrow$ 64 $\rightarrow$ 32}}
        \label{fig:cifar10_r_5}
    \end{subfigure}
    \hfill
    % Row for Configuration 4
    \begin{subfigure}[t]{0.15\textwidth}
        \centering
        \includegraphics[width=\textwidth]{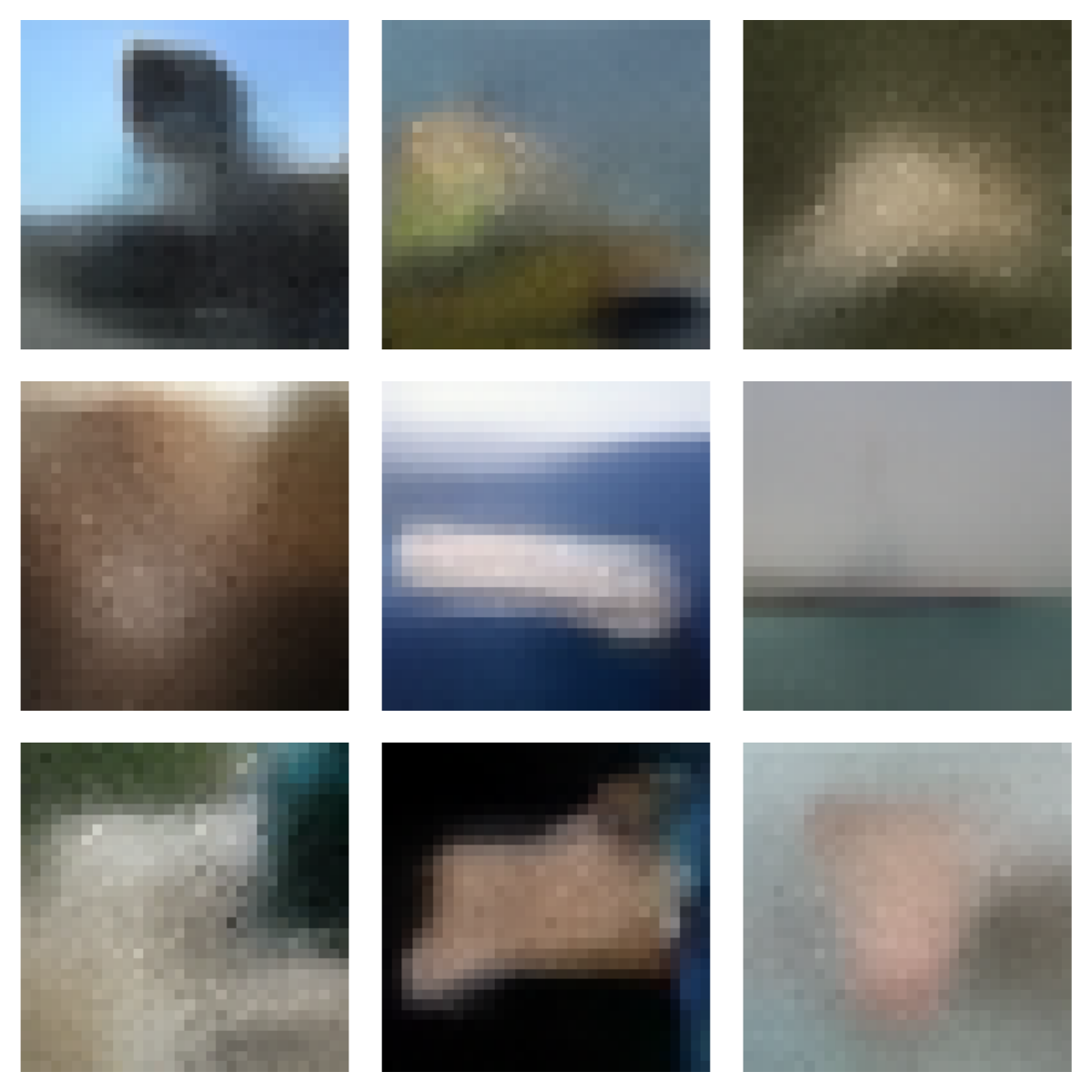}
        \caption{$r=0.75$  \shortstack{\\ 97 $\rightarrow$ 73 $\rightarrow$ 54}}
        \label{fig:cifar10_r_75}
    \end{subfigure}
    \caption{
        Reconstructed CIFAR10 samples under varying latent compression ratios $ r $. 
        At $ r = 0.1 $, the latent space is severely compressed $(202 \rightarrow 20 \rightarrow 2)$, and the reconstructions are highly uninformative. 
        As $ r $ increases to $ 0.5 $ and $ 0.75 $, the model allocates more bits to the latent hierarchy, enabling it to retain and reconstruct progressively more semantic detail. 
        This illustrates a core tradeoff: insufficient capacity leads to underrepresentation, while increasing capacity allows for richer reconstructions. 
        Only the top latent $ z_3 $ is inferred from the posterior; lower latents are sampled from the prior.
        }
    \label{fig:reconstructed_images}
\end{figure}

\section{Background and Related Work}
\label{sec:background}

\subsection{Variational Autoencoders}
\label{sec:vae}
\par Variational Autoencoders (VAEs) \cite{kingma2013vae} are deep generative models that learn to approximate complex data distributions by encoding inputs $\x$ into latent variables $\z$, which are then used to reconstruct the input. The training objective maximizes the Evidence Lower Bound (ELBO):

\begin{equation}
\label{eq:vae_elbo}
\mathcal{L} = \mathbb{E}_{\x \sim \mathcal{M}} \left[ \mathbb{E}_{q(\z \mid \x)} \left[ \log p_\theta(\x \mid \z) \right] - \text{KL}(q_\phi(\z \mid \x) \Vert p_\lambda(\z)) \right],
\end{equation}

\par where $\mathcal{M}$ is the empirical data distribution. The ELBO balances reconstruction accuracy with a KL penalty that regularizes the approximate posterior $q_\phi(\z \mid \x)$ toward a prior $p_\lambda(\z)$, promoting generalization and latent space regularity.

\subsection{Hierarchical Variational Autoencoders}
\label{sec:background_hvae}
Hierarchical VAEs (HVAEs) extend the VAE framework by introducing a hierarchy of latent variables $\z_1, \z_2, \dots, \z_L$, each representing different levels of abstraction \cite{sonderby2016laddervae, vahdat2021nvae}. Inference proceeds bottom-up:

\begin{equation}
    q_\phi(\z_1 \mid \x), \quad q_\phi(\z_2 \mid \x, \z_1), \quad \dots, \quad q_\phi(\z_L \mid \x, \z_{<L}).
\end{equation}

\par This structure enables multiscale encoding, where lower layers capture local features and higher layers encode abstract semantics. However, deeper latents often suffer from \textit{posterior collapse}, becoming statistically independent of the data. which limits the model’s representational power and effective capacity.

\subsection{OOD Detection with Deep Generative Models}
\label{sec:background_ood_dgm}
\par Deep generative models have been widely explored for OOD detection \cite{ren2019llrood, serr2021inputcomplexity, nalisnick2019dodeep, wang2020furtheranalysisood}. Although VAEs and related models can estimate data likelihoods, they often assign high likelihoods to OOD inputs. This phenomenon, attributed to \textit{estimation error} by \cite{zhang2021understandingoodfailures}, arises from architectural choices and objective mismatches rather than the generative framework itself.

\par To address these limitations, prior work has proposed modified training objectives that inject inductive biases \cite{bozkurt2021evaluatingcombvae} or impose latent consistency constraints \cite{dieng2021consistencyregularizationvae}. These approaches improve robustness but largely ignore structural aspects of the latent space. In contrast, we argue that \textit{latent dimension allocation}, how capacity is distributed across layers in HVAEs, is a critical but underexplored factor influencing estimation error.

\par Our work addresses two key gaps:
\begin{itemize}
    \item \textbf{Theoretical Gap}: The allocation of latent dimensions affects both expressivity and regularization. We formalize this tradeoff using the Information Bottleneck principle and derive bounds on generalization under varying allocation strategies.
    
    \item \textbf{Empirical Gap}: While prior work identifies HVAE likelihood-based scores as useful for OOD detection \cite{havtorn2021hvaes}, little has been done to systematically evaluate how latent structure impacts performance. We show that tuning latent allocation yields consistent gains across datasets.
\end{itemize}

\subsection{OOD Detection with Hierarchical VAEs}
\label{sec:background_ood_hvae}
Havtorn et al. \cite{havtorn2021hvaes} propose a likelihood ratio test to assess semantic in-distribution status using HVAEs. They compare:
\begin{enumerate}
    \item $\ELBO(x)$: the full ELBO where all latent variables are drawn from the posterior.
    \item $\ELBOK(x)$: a reduced ELBO where higher layers ($\z_{>k}$) are sampled from the prior.
\end{enumerate}

The difference, $\LLRK(x) = \ELBO(x) - \ELBOK(x)$, captures the informativeness of high-level latents; a large drop indicates that those latents carry distribution-specific signals, aiding in OOD detection.

However, prior HVAE work largely adopts fixed architectural patterns and does not explore the influence of latent dimensionality allocation. Our method explicitly optimizes this structure to enhance semantic abstraction and detection reliability.

\subsection{Hierarchical Structures and Abstraction}
\label{sec:hierarchy_abstract}
Deep neural networks rely on compositional depth to build increasingly abstract representations \cite{he2015resnet, krizhevsky2017imagenet, bengio2015deeplearning}. Early layers capture local patterns, while deeper layers encode semantically meaningful global features. Compression plays a key role: progressively reducing dimensionality emphasizes relevant features while discarding noise \cite{recanatesi2019dimensionality}.

This hierarchical compression is central to generalization and aligns with principles from physics, particularly Renormalization Group (RG) theory.

\subsubsection{Analogy: Renormalization Group Theory}
\label{sec:background_rgt}
RG theory describes how physical systems behave across scales by iteratively discarding irrelevant micro-level details while preserving critical structure \cite{wilson1971renorm, kadanoff1966scaling}. This process mirrors the abstraction in DNNs: successive latent transformations yield increasingly invariant and disentangled features \cite{mehta2014exactmapping}. We draw on this analogy to motivate the importance of controlled compression in hierarchical generative models.

\subsection{Information Bottleneck and Generalization}
\label{sec:background_dl_ibt}

The Information Bottleneck (IB) principle \cite{tishby2000infbottleneck} provides a formal framework for learning minimal sufficient representations. The IB objective seeks a stochastic encoding $Z$ of $X$ that maximally preserves information about a target $Y$, while discarding irrelevant input detail, as seen in ~\autoref{eq:ibp}:

\begin{equation}
\label{eq:ibp}
\inf_{p(z \mid x)} \left[ I(X; Z) - \beta I(Z; Y) \right],
\end{equation}

where $\beta$ governs the compression–prediction tradeoff. While latent dimensionality is not explicitly part of the IB objective, it plays a key role in controlling mutual information.

\par \cite{shamir2010learnandgenib} derive generalization bounds in ~\autoref{eq:ibt_gen_bounds} that make this dependency explicit:

\begin{equation}
\label{eq:ibt_gen_bounds}
I(Z; Y) \leq \hat{I}(Z; Y) + \mathcal{O} \left( \frac{|Z||\mathcal{Y}|}{\sqrt{n}} \right), \quad
I(X; Z) \leq \hat{I}(X; Z) + \mathcal{O} \left( \frac{|Z|}{\sqrt{n}} \right),
\end{equation}

\par where $|Z|$ is latent dimensionality, $|\mathcal{Y}|$ the number of classes, and $n$ the sample size. These bounds show that increasing latent capacity can improve expressivity, but may hurt generalization if it leads to overfitting or irrelevant feature capture.

\par Our work leverages the IB principle to formalize the tradeoff between compression (attenuation) and information loss in HVAEs. We derive a closed-form allocation strategy under a fixed latent budget and show that tuning this ratio significantly improves OOD detection.

\section{Approach}
\label{sec:approach}
\par We aim to explore how the relative dimensionality of the latents in an HVAE affects its performance in OOD detection.
To describe the relationship between different layers, we introduce the concept \emph{latent dimension ratio}.

\par To ensure a controlled comparison, we fix the total latent dimension budget, $b$ across all configurations, keeping all other parameters constant. By varying the ratios between layers within this fixed budget, we investigate how the relative latent dimension ratios of the configurations influence the model's ability to detect OOD data.

\subsection{Latent Dimension Ratio}
\label{sec:approach_latent_dimension_ratio}
We propose a systematic and constrained allocation strategy that enables precise control over the distribution of dimensions within the HVAE hierarchy. This approach allows for a rigorous examination of how latent configuration influences downstream performance, particularly in OOD detection.

To characterize the relationship between latent layers, we introduce the \textit{latent dimension ratio}, defined for any two adjacent layers $l_i$ and $l_{i+1}$ as:
\begin{equation}
    \label{eq:r_i}
    r_i = \frac{l_{i+1}}{l_i}.
\end{equation}

Under the simplifying assumption that this ratio remains constant across all layers, i.e., $\forall i,j: r_i \approx r_j = r$, we can fully parameterize the latent architecture using just $r$, the total latent dimensionality budget $b$, and the number of layers $N$.

\paragraph{Derivation of Layer Dimensionality}
Let $l_1$ denote the dimensionality of the first (highest-resolution) latent layer. The dimensionality of the $i$-th latent layer follows from a geometric progression:
\begin{equation}
    \label{eq:l_i}
        l_i = l_1 \cdot r^{i-1}.
\end{equation}
The total dimensionality budget across the hierarchy must satisfy:
\begin{equation}
    \label{eq:b_sum}
    b = \sum_{i=1}^{N} l_i = l_1 \cdot \left(1 + r + r^2 + \cdots + r^{N-1} \right).
\end{equation}
This is the finite geometric series:
\begin{equation}
    \label{eq:b_series}
    b = l_1 \cdot \frac{1 - r^N}{1 - r}, \quad \text{for } r \neq 1.
\end{equation}
Solving for $l_1$ gives:
\begin{equation}
     \label{eq:l1}
    l_1 = b \cdot \frac{1 - r}{1 - r^N}.
\end{equation}
Substituting back into the expression for $l_i$, we obtain a closed-form for any layer’s dimensionality:
\begin{equation}
    l_i = b \cdot \frac{(1 - r) \cdot r^{i-1}}{1 - r^N}.
    \label{eq:ratio_layer_dim}
\end{equation}

~\autoref{eq:ratio_layer_dim} is crucial for designing and analyzing HVAEs with a fixed latent capacity. It formalizes how representational capacity is compressed across the hierarchy and allows us to examine how changes in $r$ affect the informativeness of each latent layer.

\paragraph{Implementation Note.} In practice, we ensure that $\forall i,\, l_i \in \mathbb{Z}^{+}$ by rounding to the nearest integer. If rounding creates a mismatch with the budget $b$, we increment or decrement the highest layers (which are most compressive) to ensure that the total sum of dimensions matches $b$ while preserving hierarchical monotonicity.

\paragraph{Optimization Objective.}
We define a general objective function $\mathcal{F}(r)$ to quantify the effectiveness of an HVAE configuration:
\begin{equation}
    \label{eq_fr}
    \mathcal{F}(r) = \sum_{i=1}^N f(l_i),
\end{equation}
where $f(l_i)$ measures the contribution of the $i$-th layer’s dimensionality to OOD detection performance. Substituting ~\autoref{eq:ratio_layer_dim}, we have:
\begin{equation}
    \mathcal{F}(r) = \sum_{i=1}^N f\left(b \cdot \frac{(1 - r) \cdot r^{i-1}}{1 - r^N}\right).
\label{eq:objective_function}
\end{equation}
The optimal latent dimension ratio $r^*$ is then given by:
\begin{equation}
    r^* = \arg\max_{r \in (0,1]} \mathcal{F}(r).
\label{eq:r_star}
\end{equation}

\paragraph{Conclusion.}
The above derivation provides a theoretically grounded and numerically practical framework for allocating latent dimensions in HVAEs. ~\autoref{eq:ratio_layer_dim} enables us to systematically explore the effect of compression and abstraction across layers under a global constraint, while ~\autoref{eq:r_star} formulates the corresponding optimization problem. In subsequent sections, we connect this formulation to information-theoretic objectives and show that $r^*$ can be empirically determined and theoretically justified.

\subsection{Information-Theoretic Tradeoffs in Latent Dimensionality}
\label{sec:tradeoff_latent_dimension_ratio}
The mutual information between the input $X$ and the latent representation at a particular level $Z_i$ is closely tied to the dimensionality of the latent variable, $l_i = |Z_i|$. We denote this relationship by an implicit function $f(l_i)$, which captures the efficacy of the representation for OOD detection. Although $f$ is not known in closed form, it can be empirically evaluated by systematically varying $l_i$.

Poor choices of $l_i$ may lead to degraded performance due to either:

\begin{itemize}
    \item \textbf{Attenuation}: As $l_i$ increases, the marginal benefit of added capacity diminishes, i.e., $\frac{\partial f(l_i)}{\partial l_i} \rightarrow 0$.
    \item \textbf{Information Loss}: If $l_i \to 0$, the model lacks sufficient representational capacity and fails to encode meaningful semantics. In this regime, $f(l_i) \to \gamma$, where $\gamma$ is a task-specific lower bound.
\end{itemize}

These failure modes reflect the core tradeoff in the Information Bottleneck (IB) framework, which seeks to balance compression of the input $X$ with retention of task-relevant information for predicting $Y$. Let us analyze two limiting cases:

\paragraph{Case 1: $|Z| \to 0$ (Underfitting)}
\begin{equation}
    \label{eq:ibt_underfit}
    I(X; Z) \to 0, \quad I(Z; Y) \to 0, \quad O\left(\frac{|Z| |\mathcal{Y}|}{\sqrt{n}}\right) \to 0.
\end{equation}

In this regime, the latent representation is too constrained to capture relevant structure. We formalize this below:

\begin{proof}[Proof for Case 1: Mutual Information and Dimensionality]
\label{sec:mi_z}
Mutual information is given by:
$$
I(X; Z) = H(Z) - H(Z \mid X),
$$
where $H(Z)$ denotes the entropy of $Z$, and $H(Z \mid X)$ is the conditional entropy. For a Gaussian latent $Z \in \mathbb{R}^{|Z|}$ with covariance $\Sigma_Z$,
$$
H(Z) = \frac{1}{2} \log \left( (2\pi e)^{|Z|} \det \Sigma_Z \right).
$$
As $|Z| \to 0$, the space collapses, so $H(Z) \to 0$. Since $H(Z \mid X) \leq H(Z)$, it follows that:
$$
H(Z \mid X) \to 0 \quad \Rightarrow \quad I(X; Z) = H(Z) - H(Z \mid X) \to 0.
$$

Similarly, if $Z$ lacks expressive power, its relevance to $Y$ vanishes: $I(Z; Y) \to 0$. Although the complexity term $O\left(\frac{|Z| |\mathcal{Y}|}{\sqrt{n}}\right)$ improves, the representation is trivial and ineffective for prediction. 
\end{proof}

\paragraph{Case 2: $|Z| \to \infty$ (Overfitting)}
\begin{equation}
    \label{eq:ibt_overfit}
    I(X; Z) \to H(X), \quad I(Z; Y) \to H(Y), \quad O\left(\frac{|Z| |\mathcal{Y}|}{\sqrt{n}}\right) \to \infty.\footnotemark[1]
\end{equation}

As dimensionality grows unbounded, the latent encodes nearly all information from $X$, including irrelevant noise. While the mutual information terms maximize, generalization degrades due to the high model complexity. In this regime, the IB principle is violated due to lack of compression.

\begin{proof}[Proof for Case 2: Mutual Information and Dimensionality]
    Mutual information is again given by:
    $$
    I(X; Z) = H(Z) - H(Z \mid X).
    $$
    
    If $ |Z| \to \infty $, and $ Z \in \mathbb{R}^{|Z|} $ is Gaussian with covariance $ \Sigma_Z $, then the entropy:
    $$
    H(Z) = \frac{1}{2} \log \left( (2\pi e)^{|Z|} \det \Sigma_Z \right) \to \infty.
    $$
    
    Assuming the encoder becomes nearly deterministic as capacity increases (i.e., $ H(Z \mid X) \to 0 $), it follows that:
    $$
    I(X; Z) = H(Z) - H(Z \mid X) \to \infty.
    $$
    
    In this regime, $ Z $ retains all information about $ X $, including noise and irrelevant features. Although $ I(Z; Y) \to H(Y) $ may also increase, the mutual information estimate becomes unstable due to high complexity.
    
    From the generalization bound:
    $$
    O\left(\frac{|Z| |\mathcal{Y}|}{\sqrt{n}}\right) \to \infty,
    $$
    which indicates increased risk of overfitting and degraded generalization.
    
    Thus, unbounded latent capacity leads to representations that are overly complex and semantically inefficient.
\end{proof}

\footnotetext[1]{Entropy represents upper bound on mutual information.}

Together, these observations reinforce the need for a principled tradeoff: one must choose latent dimensionalities that retain semantically useful information while avoiding redundancy. We leverage this analysis to motivate our search for an optimal latent allocation strategy in ~\autoref{sec:optimal_latent_dimension_ratio}

\subsection{Optimal Latent Compression Ratio}
\label{sec:optimal_latent_dimension_ratio}

The tradeoff between compression and predictive utility, formalized by the Information Bottleneck (IB) principle, has been explored in both theoretical and empirical studies. \cite{alemi2016deepvib} analyze this tradeoff in models trained using a variational IB formulation, while \cite{ruff2019deepssad} empirically show that increasing the latent dimensionality $|Z|$ generally improves out-of-distribution (OOD) detection, up to a point of diminishing returns. These findings suggest that a model must allocate sufficient capacity to avoid underfitting, but not so much that it overfits to irrelevant detail. This motivates the search for an optimal allocation strategy across hierarchical latent layers.

Let $b$ denote the total latent dimensionality budget of an HVAE with $N$ layers. Assume that each layer $i$ receives $l_i$ dimensions and that dimensionality is distributed geometrically across the hierarchy with a fixed compression ratio $r$, such that:

\begin{equation}
  l_i = b \cdot \frac{(1 - r) \cdot r^{i-1}}{1 - r^N}.
  \label{eq:optimal_latent_allocation}
\end{equation}

Let $f(l_i)$ denote the latent utility function, implicitly relating dimensionality to detection efficacy. The goal is to find the compression ratio $r^\ast$ that maximizes total efficacy under the latent budget $b$:
\begin{equation}
    r^\ast = \underset{r \in (0, 1]}{\argmax} \sum_{i=1}^{N} f(l_i) = \underset{r \in (0, 1]}{\argmax} \sum_{i=1}^{N} f\left(b \cdot \frac{(1 - r) \cdot r^{i-1}}{1 - r^N}\right).
    \label{eq:optimal_ratio}
\end{equation}

\begin{prop}
\label{prop:existence_r_star}
An optimal latent compression ratio $r^\ast$ exists that maximizes total efficacy $\mathcal{F}(r)$ under a fixed budget $b$.
\end{prop}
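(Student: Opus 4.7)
The plan is to reduce the existence claim to the extreme value theorem by extending $\mathcal{F}$ continuously to a compact domain and invoking mild regularity of the per-layer utility $f$. Concretely, I would treat $r \mapsto \mathcal{F}(r)$ as a continuous function on the closed interval $[0,1]$ and then appeal to compactness to conclude that a maximizer exists, with a short auxiliary argument that places it in $(0,1]$ as required by Equation \ref{eq:r_star}.

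First I would establish that for $r \in (0,1)$ the allocation $l_i(r) = b \cdot (1-r)\, r^{i-1}/(1-r^N)$ given by Equation \ref{eq:optimal_latent_allocation} is analytic and strictly positive; at $r = 1$, applying L'Hôpital's rule to $(1-r)/(1-r^N)$ yields the removable value $1/N$, so $l_i(1) = b/N$ for every $i$; and as $r \to 0^+$ we have $l_1 \to b$ while $l_i \to 0$ for $i \geq 2$. Hence each $l_i(\cdot)$ extends continuously to $[0,1]$. Next, I would adopt the mild assumption that $f:[0,b] \to \mathbb{R}$ is continuous and bounded, a hypothesis consistent with the paper's framing of $f$ as an OOD-detection efficacy (e.g.\ a bounded metric such as AUROC) that varies smoothly with latent capacity, attenuates as $l_i$ grows, and approaches a floor $\gamma$ as $l_i \to 0$ (Case 1 in Section \ref{sec:tradeoff_latent_dimension_ratio}). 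Under this assumption $\mathcal{F}(r) = \sum_{i=1}^{N} f(l_i(r))$ is a finite sum of compositions of continuous functions, hence continuous on the compact interval $[0,1]$, and the extreme value theorem yields a maximizer $r^\ast \in [0,1]$.

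To match the feasible set $(0,1]$ specified in Equation \ref{eq:r_star}, I would rule out $r = 0$ using the degeneracy already flagged by the Case~1 analysis. At $r = 0$ exactly $N-1$ layers collapse to the floor, giving $\mathcal{F}(0) = f(b) + (N-1)\gamma$, whereas any sufficiently small $r > 0$ keeps every $l_i$ strictly positive; combined with the assumption that $f$ is strictly above $\gamma$ on $(0, b]$ (consistent with the Case~1 limit being approached from above, not attained from within), this shows $r = 0$ is strictly dominated and $r^\ast \in (0,1]$. In the degenerate border case where $f(l) = \gamma$ on a neighborhood of $0$, the argument still goes through after restricting to a subinterval $[r_{\min}, 1]$ for any small $r_{\min} > 0$.

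The main obstacle is the implicit nature of $f$: the entire argument rests on continuity (and boundedness) of $f$, properties that the paper motivates empirically and through the two limiting regimes in Section \ref{sec:tradeoff_latent_dimension_ratio} but does not formally establish. A rigorous statement therefore has to either promote these properties to explicit hypotheses on $f$ or derive them from the underlying generative model together with the chosen mutual-information estimator. A secondary technical subtlety is the integer-rounding step remarked on after Equation \ref{eq:ratio_layer_dim}: if one insists $l_i \in \mathbb{Z}^+$ then the feasible set is finite so existence becomes trivial, but smoothness in $r$ is lost. I would sidestep this by proving existence in the continuous relaxation and relegating integrality to the implementation layer, noting that the rounded allocation is a discretization of the continuous optimum.
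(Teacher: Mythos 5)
Your proposal is correct and follows essentially the same route as the paper: establish continuity of $\mathcal{F}(r)=\sum_{i=1}^N f(l_i(r))$ from assumed regularity of $f$, obtain boundedness, and invoke the extreme value theorem on a compact domain. The difference is that your version is more careful at the endpoints than the paper's own argument: the paper applies the extreme value theorem directly on $(0,1]$, calling it compact (it is not, being half-open), and never addresses the $0/0$ indeterminacy of $(1-r)/(1-r^N)$ at $r=1$, whereas you extend each $l_i$ continuously to $[0,1]$ (via the removable value $l_i(1)=b/N$), maximize over the genuinely compact interval, and then exclude $r=0$ by a dominance argument so that $r^\ast\in(0,1]$. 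You also correctly flag that continuity of $f$ must be an explicit hypothesis; the main text does assume it, though the appendix version erroneously claims continuity follows from the saturation and information-loss properties. In short, your proof buys actual rigor where the paper's has minor technical slips, at the cost of one extra assumption ($f$ strictly above the floor $\gamma$ on $(0,b]$, or the fallback restriction to $[r_{\min},1]$) needed to place the maximizer away from $r=0$; your remark on sidestepping integer rounding via the continuous relaxation matches the paper's implementation note.
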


\begin{proof}
Define the objective:
\begin{equation}
\mathcal{F}(r) = \sum_{i=1}^N f\left(b \cdot \frac{(1 - r) \cdot r^{i-1}}{1 - r^N}\right).
\end{equation}

We assume the following conditions on $f$:
\begin{itemize}
    \item \textbf{Continuity:} $f$ is continuous in $l_i$.
    \item \textbf{Saturation:} For large $l_i$, $\frac{\partial f(l_i)}{\partial l_i} \leq 0$, indicating diminishing returns.
    \item \textbf{Information Loss:} $\lim_{l_i \to 0} f(l_i) = \gamma$ for some $\gamma$, representing a minimum threshold for utility.
\end{itemize}

Since $r \in (0, 1]$ and $f$ is continuous, each term in $\mathcal{F}(r)$ is continuous in $r$. Thus, $\mathcal{F}(r)$ is a continuous function on a compact domain $(0, 1]$. Additionally, because $f(l_i) \leq M$ for some bound $M$, we have:
\begin{equation}
    \mathcal{F}(r) \leq N \cdot M,
\end{equation}
ensuring boundedness.

By the Extreme Value Theorem, a continuous function on a compact interval attains its maximum. Therefore, there exists $r^\ast \in (0, 1]$ such that:
\begin{equation}
    \mathcal{F}(r^\ast) = \max_{r \in (0, 1]} \mathcal{F}(r).
\end{equation}
\end{proof}

This theoretical guarantee validates our formulation: under mild regularity conditions on $f$, an optimal ratio $r^\ast$ exists that balances capacity across the hierarchy to maximize downstream utility. 

In the context of the Information Bottleneck, such a configuration ensures that each layer in the HVAE retains just enough information to support the task (e.g., OOD detection), without wasting capacity or overfitting to spurious details. Intuitively, this corresponds to a form of progressive abstraction, where higher layers compress while preserving semantic structure.

Hence, a hierarchy satisfying $\forall i, r_i = r^\ast$ defines an optimal allocation strategy. In cases where layer-wise ratios vary, i.e., $\exists i, j$ such that $r_i \ne r_j$, we explore alternate allocations empirically in ~\autoref{sec:results}.

\subsection{Empirical Evaluation of \texorpdfstring{$r^\ast$}{r*}}
\label{sec:empirical_eval}

While the existence of an optimal compression ratio $r^\ast$ is guaranteed theoretically, its exact value depends on the specific dataset, model, and downstream task. As the latent utility function $f$ is not known in closed form, we determine $r^\ast$ empirically.

We perform a grid search over candidate values of $r \in (0,1]$, subject to a fixed latent budget $b$ and hierarchy depth $N$. For each configuration, we evaluate OOD detection performance using standard metrics: AUROC, AUPRC, FPR80, and FPR95. In addition, we estimate the mutual information $I(X; Z_i)$ at each latent level $Z_i$ to better understand how changes in $r$ affect semantic compression and information retention.

We define the empirical optimum $r^\ast$ as the value that yields the best tradeoff between information retention and generalization across a range of OOD detection tasks. Experimental results and full evaluation details are presented in ~\autoref{sec:results}.

To support this analysis, we describe the estimator used to compute $I(X; Z_i)$ for hierarchical VAEs below.

\subsection{Mutual Information Estimator}
\label{proof:estimator_derivation}
For a given latent in the hierarchy $Z_i$, the mutual information with the input $X$ is defined as:
\begin{equation}
    \label{eq:mutual_information}
    I(X; Z_i) = D_{\text{KL}} \left( p(x, z_i) \,\middle\|\, p(x) \cdot p(z_i) \right)
\end{equation}

Using the definition of KL divergence, this can be expressed as an expectation over the joint distribution:
\[
     I(X; Z_i) = \mathbb{E}_{p(x, z_i)} \left[ \log \frac{p(x, z_i)}{p(x) \cdot p(z_i)} \right]
\]

Applying logarithmic properties, we expand this into three terms:
\begin{align}
     I(X; Z_i) &=  \mathbb{E}_{p(x, z_i)} \left[ \log p(x, z_i) \right] \label{eq:mi_joint} \\
     &\quad - \mathbb{E}_{p(x, z_i)} \left[ \log p(x) \right] \label{eq:mi_marginal} \\
     &\quad - \mathbb{E}_{p(x, z_i)} \left[ \log p(z_i) \right] \label{eq:mi_prior}
\end{align}

These terms are approximated as follows:

\begin{description}
    \item[\autoref{eq:mi_joint}] The log-likelihood of the joint distribution is computed by sampling $z_i$ from the approximate posterior $q(z_i \mid x)$, and sampling $z_{j \ne i}$ from their respective priors. The log-likelihood of $x$ is evaluated under the resulting generative path.
    
    \item[\autoref{eq:mi_marginal}] The marginal likelihood $\log p(x)$ is intractable and approximated using the Evidence Lower Bound (ELBO). Since the non-target latents are sampled from their priors, the KL term vanishes at layer $i$.
    
    \item[\autoref{eq:mi_prior}] The log-prior over $z_i$ is evaluated directly using the prior distribution (e.g., standard Gaussian). The expectation is taken over the samples $z_i \sim q(z_i \mid x)$.
\end{description}

This estimator allows us to quantify how effectively each latent encodes information about the input under different compression ratios $r$, linking representation capacity to downstream task utility.

\section{Research Questions and Experiment Design}
\label{sec:rq}

\subsection{Research Questions}

To evaluate the influence of latent dimension allocation on the performance of HVAE-based OOD detectors, we investigate the following research questions:

\begin{description}
\item[\textbf{RQ1}] \textit{How does the latent dimension ratio $r$ affect the OOD detection performance of HVAE models across different datasets?}
\item[\textbf{RQ2}] \textit{What compression ratio $r^\ast$ yields optimal OOD detection performance under a fixed latent budget, and how consistent is this optimum across datasets?}
\item[\textbf{RQ3}] \textit{How does mutual information $I(X; Z_i)$ vary with $r$, and can it explain observed trends in OOD performance?}
\end{description}

\subsection{Experimental Setup}
To explore these questions, we conduct a comprehensive series of experiments using both grayscale and natural image datasets. For each configuration, we control for total latent budget $b$ and depth $N$, and systematically vary the latent compression ratio $r$. We include the ID/OOD dataset pairs from prior work \cite{havtorn2021hvaes}, and extend the evaluation to additional ID/OOD pairings.

\paragraph{Datasets.}
To evaluate the generality and robustness of our method, we conduct experiments on both grayscale and natural image datasets. Following \cite{havtorn2021hvaes}, we include the standard evaluation settings: FashionMNIST \cite{xiao2017fmnist} (ID) vs.\ MNIST \cite{deng2012mnist} (OOD), and CIFAR10 \cite{krzhevsky2009cifar10} (ID) vs.\ SVHN \cite{netzer2011svhn} (OOD). \emph{We expand this evaluation in two key ways.}

First, we increase OOD diversity for FashionMNIST by adding KMNIST and notMNIST as additional OOD datasets. Second, we introduce a new ID setting using MNIST, evaluating it against three OOD counterparts: FashionMNIST, KMNIST, and notMNIST. Finally, we augment the natural image setting by including CelebA as an additional OOD dataset alongside SVHN. 

This results in a total of three ID datasets—FashionMNIST, MNIST, and CIFAR10, each paired with at least 2 OOD datasets, enabling a more comprehensive assessment of our framework across distinct visual domains and difficulty levels.

\subsection{Treatments for RQ1–RQ3}

We explore compressed configurations by varying the ratio $r$ across a predefined grid:
$$
r \in \{0.1, 0.25, 0.5, 0.75\},
$$
which spans the range from aggressive compression to minimal compression.
For each $r$, we compute the corresponding layer-wise dimensionalities using ~\autoref{eq:ratio_layer_dim}, train the HVAE model, and evaluate OOD performance using standard detection metrics.

To answer \textbf{RQ3}, we estimate the mutual information $I(X; Z_3)$ \footnotemark[2] for each compressed configuration (i.e., same $r$ sweep), across all ID datasets. This ensures consistent experimental conditions for comparing OOD performance with latent information content.

For settings directly comparable to \cite{havtorn2021hvaes}, we adopt their latent budget and depth:
\begin{itemize}
    \item FashionMNIST (ID), MNIST (ID): $b = 32$, $N = 3$
    \item CIFAR10 (ID): $b = 228$, $N = 3$
\end{itemize}

\subsection{Controls for RQ1}
\label{sec:rq_control}
To further understand the role of latent configuration beyond fixed $r$, we include several non-compressed control configurations. These span a variety of expansion, compression, and stable allocation regimes:

\begin{description}
    \item[Expand-then-Retain:] \(6 \to 13 \to 13\) (Grayscale), \(42 \to 91 \to 91\) (Natural Images)
    \item[Progressive Expansion:] \(8 \to 10 \to 14\), \(56 \to 80 \to 98\)
    \item[Expand-then-Compress:] \(8 \to 16 \to 8\), \(56 \to 112 \to 56\)
    \item[Stable Allocation:] \(10 \to 11 \to 11\), \(70 \to 77 \to 77\)
    \item[Compress-then-Expand:] \(13 \to 6 \to 13\), \(91 \to 42 \to 91\)
\end{description}

\paragraph{Control Configuration Justification}These control groups were selected to reflect common heuristics in architecture design and ablations used in prior work. For example, expand–then–compress mimics encoder-decoder symmetry, stable allocations test the effect of uniform latent capacity, and compress–then–expand configurations simulate bottleneck-style representations. Together, these serve as diverse, plausible baselines to assess whether our geometric successive compression assumption is actually necessary for strong OOD performance, or whether arbitrary tuning can suffice.

\subsection{Procedure}
For each configuration (compressed or control), the HVAE is trained from scratch. For compressed settings, layer dimensionalities are determined using ~\autoref{eq:ratio_layer_dim}. We then compute the $\text{LLR}^{2}$ score for each configuration and evaluate OOD detection performance using the following metrics:

\subsection{Evaluation Metrics}
To answer RQ1 and RQ2, we use the metrics employed in prior work on HVAE OOD Detection : \textbf{FPR@80},  \textbf{FPR@95},  \textbf{AUROC} and\textbf{ AUPRC} \cite{havtorn2021hvaes, li2022adaptiveratio}. RQ3 results are derived from mutual information estimates obtained across configurations.\footnotetext[2]{Using estimator described in ~\autoref{proof:estimator_derivation}}
\section{Experimental Results}
\label{sec:results}

We organize our findings around the research questions introduced in ~\autoref{sec:rq}.

\subsection{RQ1: How Does Latent Compression Ratio Affect OOD Performance?}
\label{sec:rq1-results}

We evaluate OOD detection performance across a sweep of compression ratios $r \in \{0.1, 0.25, 0.5, 0.75\}$ using standard metrics (AUROC, AUPRC, FPR80, FPR95). ~\autoref{fig:heatmap_fmnist}, \autoref{fig:heatmap_mnist}, and \autoref{fig:heatmap_cifar} present the results for grayscale and natural image datasets, respectively.

\paragraph{Interpreting the Heatmaps.}
Each heatmap visualizes detection performance across configurations (rows) and metrics (columns), grouped by OOD dataset. \textbf{Red denotes better performance}, \textbf{blue indicates worse}.
We note that:
\begin{itemize}
    \item Higher is better for AUROC and AUPRC.
    \item Lower is better for FPR80 and FPR95.
\end{itemize}
To compute the mean score for each configuration (rightmost column), we average the values of AUROC, AUPRC, and $(1 - \text{FPR})$ for each metric, so that all four metrics are aligned in scale where higher is better. This normalized mean provides a single summary value for comparing overall detection quality across configurations.
Uniformly red rows indicate strong and stable performance across metrics, providing a visual cue for identifying the most effective latent compression strategies.

\paragraph{FashionMNIST In-Distribution.}
In the compressed setting, performance increases monotonically with $r$ until reaching a maximum at $r = 0.75$, which is the empirically optimal setting across all OOD datasets considered. The lowest-performing configuration corresponds to $r = 0.1$, indicating that excessive compression consistently degrades detection accuracy.

\paragraph{MNIST In-Distribution.}
In contrast, MNIST exhibits optimal performance at a moderate compression level of $r = 0.5$. Performance degrades beyond this point, suggesting diminishing returns and possible overfitting at $r = 0.75$.

\begin{figure*}[ht]
    \centering
    \includegraphics[width=\linewidth]{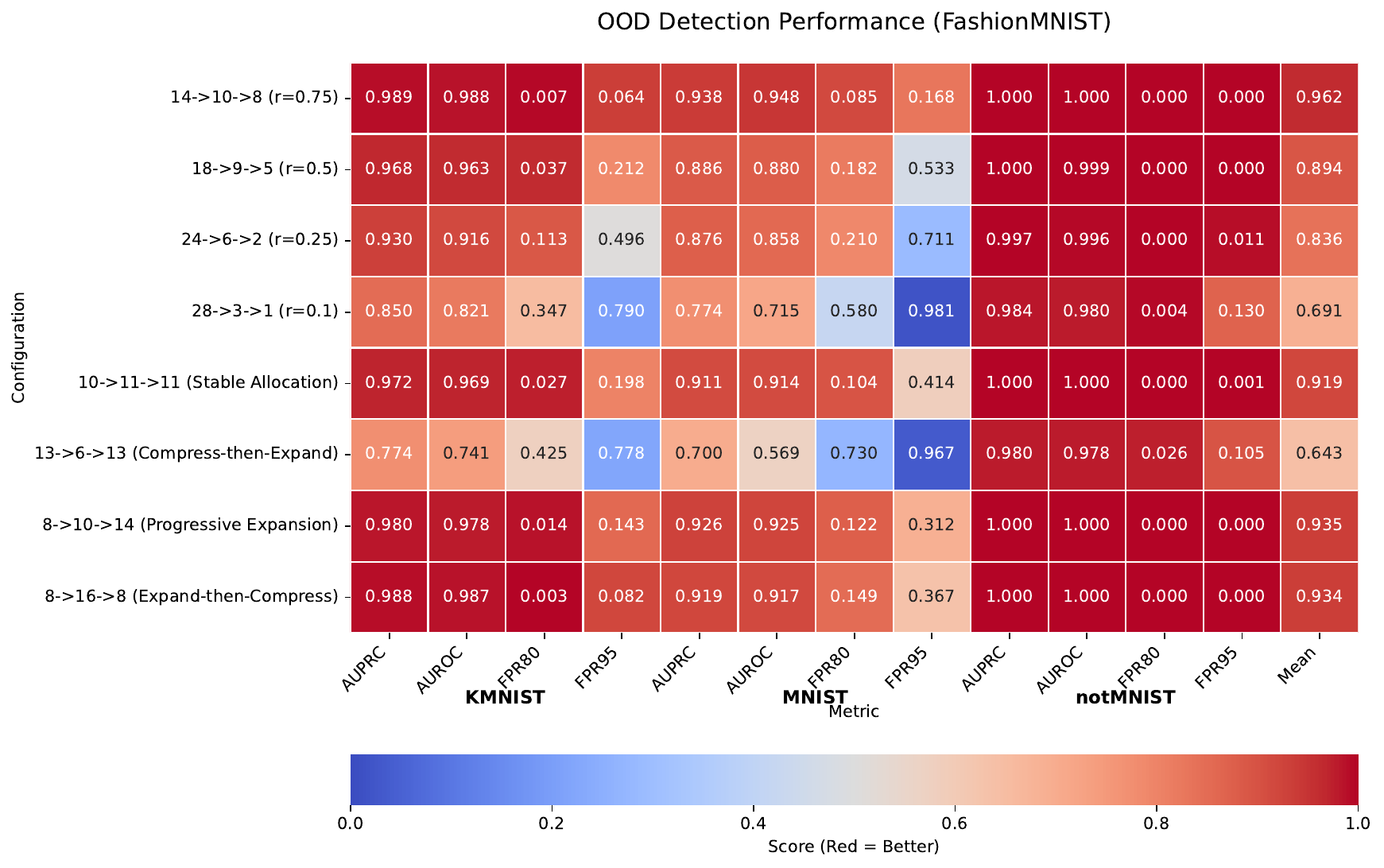}
     \caption{The optimal configuration for FashionMNIST-In is $r^{\ast} = 0.75$, corresponding to $\{ 14 \rightarrow 10 \rightarrow 8 \}$. This configuration is optimal across all metrics for FashionMNIST-In vs all OOD pairings.}
    \label{fig:heatmap_fmnist}
\end{figure*}

\begin{figure*}[ht]
    \centering
    \includegraphics[width=\linewidth]{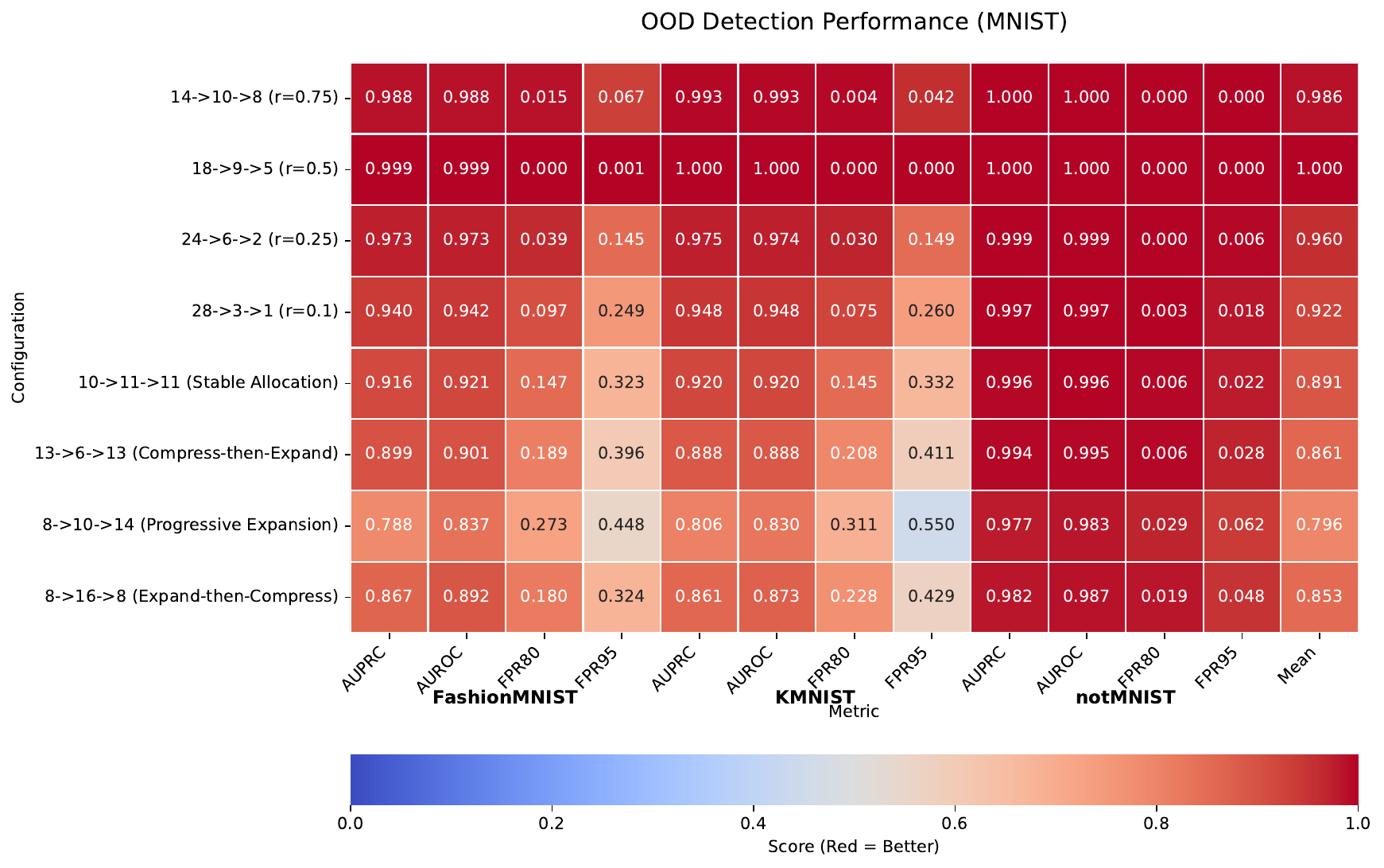}
     \caption{The optimal configuration for MNIST-In is $r^{\ast} = 0.5$, corresponding to $\{ 18 \rightarrow 9 \rightarrow 5 \}$. This configuration is optimal across all metrics for MNIST-In vs all OOD pairings. Compressed configurations drastically outperform controlled ones.}
    \label{fig:heatmap_mnist}
\end{figure*}

\paragraph{CIFAR10 In-Distribution.}
For natural images, the best-performing configuration is found at $r = 0.25$, suggesting that shallower hierarchies benefit from moderate compression. Both excessive compression ($r = 0.1$) and minimal compression ($r = 0.75$) yield sub-optimal results, reinforcing the importance of balance.

\paragraph{Comparison to Prior Work.}
Across all datasets, our method outperforms the baseline HVAE configurations used in \cite{havtorn2021hvaes}, (\textit{expand then compress} for grayscale and \textit{r=0.5} for natural images), which rely on fixed latent structures rather than optimized ratios.  The $r$-sweep consistently identifies configurations with superior AUROC, AUPRC, and lower FPR values, demonstrating that latent compression ratio is a critical and previously underexplored design axis for HVAE-based OOD detection.

These results confirm a consistent, nonlinear relationship between $r$ and detection efficacy, validating the motivation for our geometric latent allocation framework.

\paragraph{Control Configuration Performance.}
To contextualize the impact of our geometric allocation strategy, we compared it against several non-compressed control configurations described in ~\autoref{sec:rq_control}. Across all datasets, the best-performing compressed configurations consistently outperform these controls. For example, the compress-then-expand configuration ($13 \to 6 \to 13$) yields significantly higher FPR and lower AUPRC on FashionMNIST and MNIST, indicating poor OOD discrimination. Stable allocations such as $10 \to 11 \to 11$ exhibit more balanced behavior, but still fail to match the sensitivity and generalization of optimized $r$ configurations. These results support our central claim: principled dimension allocation, rather than arbitrary tuning, yields robust, high-performing representations for OOD detection.

\begin{figure*}[ht]
    \centering
    \includegraphics[width=0.85\linewidth]{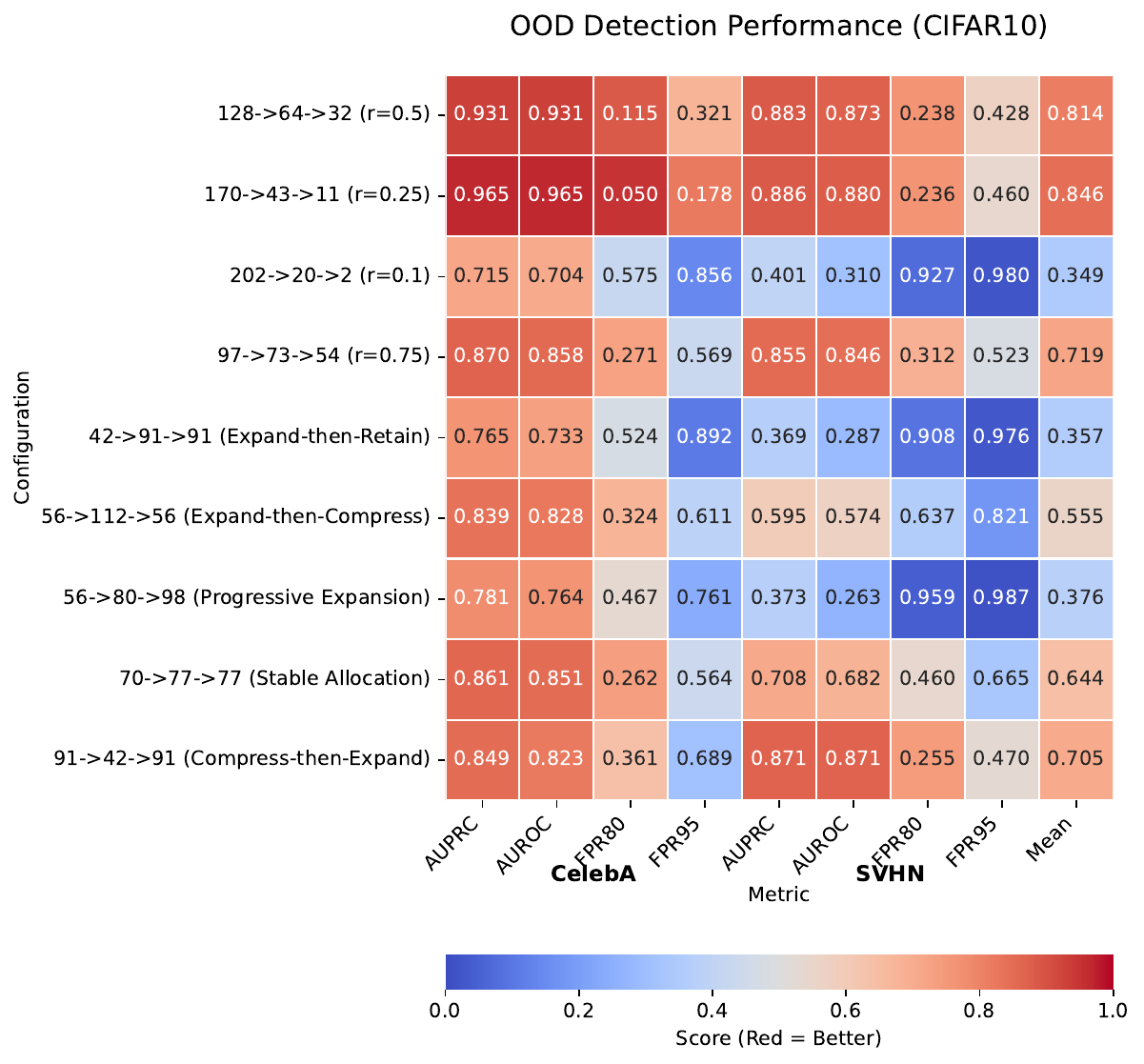}
    \caption{The configuration, $r^{\ast}=0.25$ returns the optimal configuration of $170 \rightarrow 43 \rightarrow 11$. This configuration is optimal across all metrics for CIFAR10 for all pairings except one. This is in the case of FPR95 for SVHN. It is worth noting that for CelebA out, the compressed configurations drastically outperform their controlled counterparts. }
    \label{fig:heatmap_cifar}
\end{figure*}

\subsection{RQ2: What Compression Ratio \texorpdfstring{$r^\ast$}{r*} is Optimal Across Datasets?}
\label{sec:rq2-results}

We define $r^\ast$ as the compression ratio that yields the highest average performance across OOD pairings for a given ID dataset. Our experiments identify distinct dataset-specific optima:

\begin{itemize}
    \item \textbf{FashionMNIST:} $r^\ast = 0.75$ across all ID-OOD pairings.
    \item \textbf{MNIST:} $r^\ast = 0.5$, indicating optimal performance at intermediate compression.
    \item \textbf{CIFAR10:} $r^\ast = 0.25$, suggesting early abstraction is beneficial for natural images.
\end{itemize}

These findings demonstrate that while a clear optimum exists for each dataset, $r^\ast$ is not universal. Instead, it reflects dataset-specific representational demands, consistent with our formulation of the latent utility function $f(l_i)$.

\subsection{RQ3: How Does Mutual Information \texorpdfstring{$I(X; Z_i)$}{I(X; Zi)} Vary with \texorpdfstring{$r$}{r}?}
\label{sec:rq3-results}

To understand the information-theoretic behavior underlying these trends, we estimate the mutual information $I(X; Z_3)$ for the final latent level across all datasets and compression ratios. ~\autoref{fig:mi_plot} shows the results.

\paragraph{R vs Mutual Information in ~\autoref{fig:mi_plot}.}
The x-axis denotes the latent dimension ratio $r$, and the y-axis reports the estimated mutual information $I(X; Z_3)$ for each dataset. The red crosses indicate the empirically optimal $r^\ast$ for each dataset as identified in ~\autoref{sec:rq2-results}. The shaded regions capture variance for the MI estimates across runs. A desirable configuration achieves high utility (i.e., performance, as seen in ~\autoref{fig:heatmap_fmnist}, ~\autoref{fig:heatmap_mnist}, ~\autoref{fig:heatmap_cifar}) without unnecessarily high $I(X; Z_3)$, indicating a semantically efficient representation.

\begin{figure}[ht]
    \centering
    % Adjust the width parameter to make the plot smaller
    \includegraphics[width=0.75\linewidth]{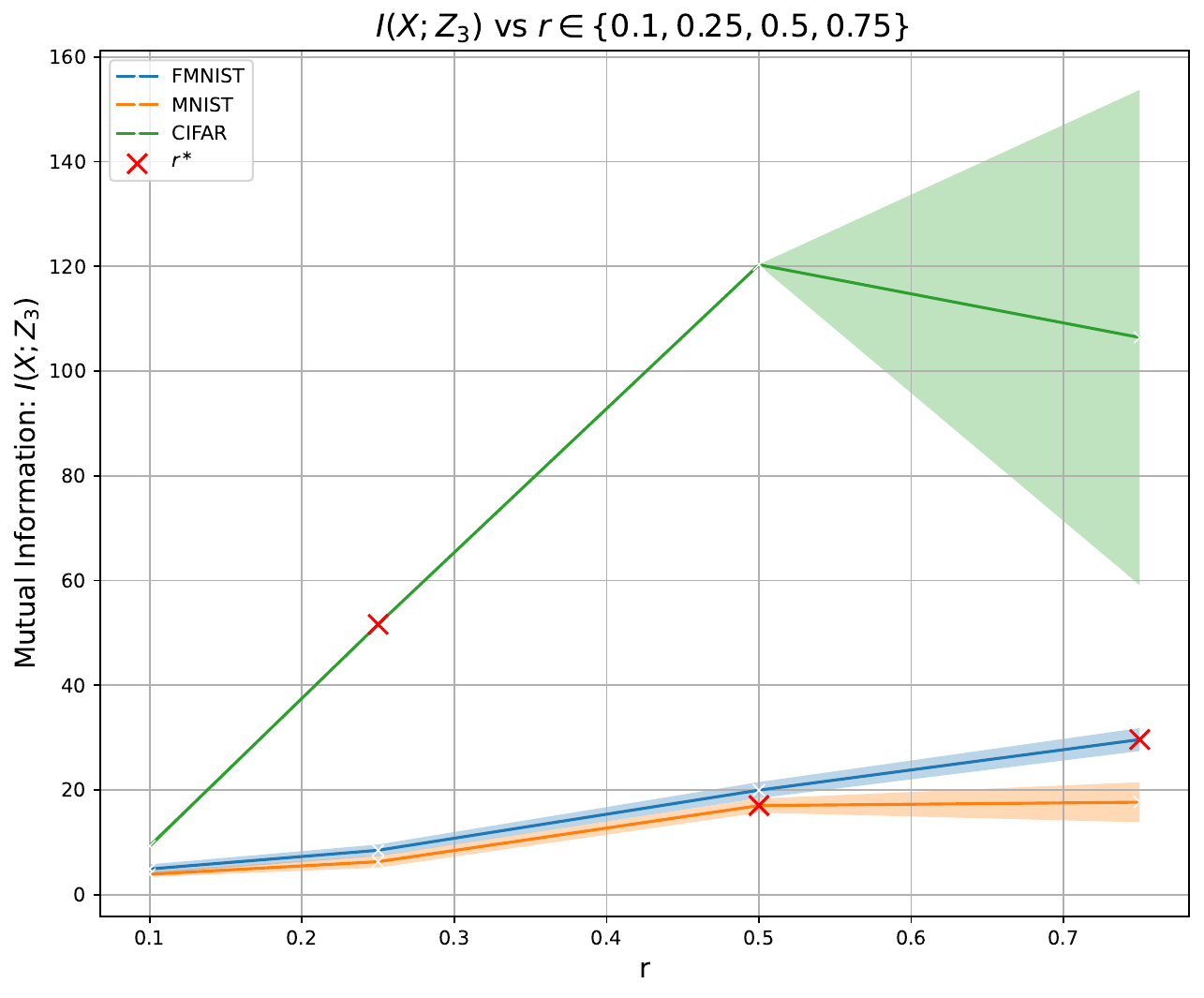}
    \caption{Mutual Information as r is varied in compressed configurations. Here we can observe the tradeoff between information loss and attenuation. We visualize the mutual information estimates for samples taken from each respective dataset, with the latent $Z_{3}$ for each configuration corresponding to r. $r^{\ast}$ for each configuration is higlighted.}
    \label{fig:mi_plot}
\end{figure}

\paragraph{Information Loss}
At $r = 0.1$, all datasets exhibit low mutual information, consistent with aggressive compression. These configurations also correspond to the weakest detection performance, validating the theoretical link between under-representation and task failure.

\paragraph{Attenuation}
As $r \to 1$, mutual information increases, but detection performance plateaus or declines. For example, although CIFAR10’s $I(X; Z_3)$ increases sharply with $r$, its optimal $r^\ast = 0.25$ lies far below the MI maximum, indicating diminishing returns and likely overfitting.

\paragraph{Variance Trends}
The shaded region around the CIFAR10 curve grows with $r$, showing that $\Var(I(X; Z_3))$ increases significantly with latent dimensionality. This suggests that larger latent spaces are more sensitive to noise and dataset artifacts, reinforcing the importance of structural constraints.

Together, these results support the tradeoff predicted by the Information Bottleneck: an optimal compression ratio $r^\ast$ exists that balances semantic sufficiency with generalization robustness.

\section{Analysis}
\label{sec:analysis}

\subsection{Optimal Compression and Generalization}
\label{sec:analysis_optimal}

Across all configurations evaluated, the empirically identified optimal ratio $r^\ast$ for a given ID dataset is consistently optimal across its corresponding OOD pairings. This suggests that $r^\ast$ captures dataset-specific structure that generalizes across distributional shifts. However, for other OOD datasets (e.g., notMNIST, Omniglot) \cite{bulatov2011notMNIST, lake2015omniglot}, the performance gap between $r^\ast$ and alternative configurations is minimal. To understand this, we examine the generalization bounds described by \cite{tishby2018dlandibp}.

\subsection{Bound Tightness and Task Simplicity}
\label{sec:analysis_generalization}

We analyze the generalization error bound of the form:

\begin{equation}
    \left| I(Z; Y) - \hat{I}(Z; Y) \right| \leq O\left(\frac{l_i |\mathcal{Y}|}{\sqrt{n}} \right),
    \label{eq:ibt_gen_bounds_sub}
\end{equation}
where $I(Z; Y)$ is the true mutual information between representation and label, $\hat{I}(Z; Y)$ its empirical estimate, and the error term depends on latent dimension $l_i$, label set size $|\mathcal{Y}|$, and sample size $n$.

We distinguish between \texttt{Near-OOD} (hard) and \texttt{Far-OOD} (easy) scenarios, where the latter refers to OOD distributions that are clearly separable from the ID distribution (e.g., Omniglot). In these tasks, only minimal information is needed to discriminate between in and out distributions.

\paragraph{Simplicity Implies Low Mutual Information}

For \texttt{Far-OOD} datasets\footnotemark[3], the Bayes-optimal decision boundary may depend on only a small number of features. Consequently, the required mutual information $I(Z; Y)$ is low, and effective detection can be achieved with compressed representations. As a result, the model may perform well across a range of $r$ values, explaining why performance at $r^\ast$ is not uniquely superior.

\paragraph{Error Term Behavior and Model Complexity}

The approximation error in ~\autoref{eq:ibt_gen_bounds_sub} shrinks with simpler models and larger datasets:

\begin{equation}
  \lim_{l_i \to 0,\ |\mathcal{Y}| \to 2,\ n \to \infty} O\left( \frac{l_i |\mathcal{Y}|}{\sqrt{n}} \right) = 0.
\end{equation}

In practice, OOD detection is binary ($|\mathcal{Y}| = 2$), and simple tasks allow for smaller latent dimensions $l_i$. Thus, for well-separated ID/OOD distributions, the bound becomes tight, and empirical estimates $\hat{I}(Z; Y)$ closely approximate true mutual information. That is:

\[
    \hat{I}(Z; Y) \approx I(Z; Y)
\]

\paragraph{Implications for HVAE Configuration}

This analysis clarifies why $r^\ast$ may not dominate in all settings. For easy detection tasks, several configurations suffice, and the choice of $r$ is less sensitive. In contrast, for \textit{Near-OOD} datasets, the model must retain more task-relevant information, and suboptimal $r$ leads to clear degradation, making $r^\ast$ uniquely optimal.

\paragraph{Summary}

The informativeness of $r^\ast$ depends on the complexity of the OOD detection task. For \texttt{Far-OOD} settings, generalization bounds are tight, and many latent allocations perform well. For \textit{Near-OOD}, accurate information retention is critical, and $r^\ast$ yields distinct gains.

\footnotetext[3]{We use ``Far-OOD'' to refer to dataset pairings such as Omniglot and notMNIST, which differ significantly from the ID distribution.}

\section{Conclusion}
\label{sec:conclusion}
We presented a principled framework for allocating latent dimensionality in hierarchical VAEs, introducing the concept of a latent compression ratio $ r $ and demonstrating how it can be optimized under a fixed capacity budget. By leveraging geometric allocation and information-theoretic analysis, we formulated and empirically validated the existence of an optimal compression ratio $ r^\ast $ that balances abstraction and informativeness for out-of-distribution (OOD) detection.

Our experiments show that $ r^\ast $ consistently outperforms other configurations in the compressed setting across all datasets, supporting the hypothesis that hierarchical representational structure directly influences OOD detection efficacy. Furthermore, mutual information estimates reveal that both overcompression and undercompression degrade performance in predictable ways, aligning with our information bottleneck formulation.

This work highlights the utility of compression-aware latent design and motivates future extensions, including adapting the ratio dynamically per layer, exploring task-adaptive priors, and extending these ideas to non-Gaussian latent distributions or multimodal generative tasks.

\section*{Limitations}
\label{sec:limitations}

Our work focuses on improving the performance of Hierarchical Variational Autoencoders (HVAEs) for out-of-distribution (OOD) detection through principled latent dimension allocation. As such, we do not compare against non-hierarchical generative models or discriminative OOD detectors. This is by design: our goal is to analyze and improve the internal structure of HVAEs, rather than to evaluate them as the best-performing class of OOD detectors. Techniques such as MSP, ODIN \citep{hendrycks2016baseline, liang2017odin}, or energy-based scores \citep{liu2020energyOOD} operate at the output level and are orthogonal to the representational improvements we propose.

Another limitation is that our theoretical framework assumes a fixed total latent dimensionality budget and focuses on allocation under this constraint. In settings where total model capacity is unconstrained or where other architectural bottlenecks dominate, our approach may offer less benefit.

Finally, while we observe consistent performance gains across multiple datasets, the optimal ratio $r^\ast$ is determined empirically per dataset. Understanding how $r^\ast$ scales with dataset complexity, modality, or downstream tasks remains an open direction for future work.

{
    \small
    \bibliographystyle{elsarticle-num}
    \bibliography{main}
}

\clearpage

\begin{center}
    \textbf{\huge Supplementary Material}
\end{center}

\section{Proofs}
\begin{proof}[Proof of Existence of $r^\ast$]
\label{sec:r_star_proof}
    
    Given the total latent budget $b$ and number of layers $ N $, the latent dimension $ l_i $ for layer $i$ is:
    \begin{equation}
        l_i = b \cdot \frac{(1 - r) \cdot r^{i-1}}{1 - r^N}.
        \label{eq:appendix_layer_dim}
    \end{equation}
    
    Define the objective function:
    \begin{equation}
        \mathcal{F}(r) = \sum_{i=1}^N f(l_i) = \sum_{i=1}^N f \left( b \cdot \frac{(1 - r) \cdot r^{i-1}}{1 - r^N} \right).
        \label{eq:appendix_objective}
    \end{equation}
    
    Given the properties of \( f \):
    \begin{equation}
        \frac{\partial f(l_i)}{\partial l_i} \leq 0 \text{ for sufficiently large } l_i \text{ (Saturation Property)}.
    \end{equation}
    
    \begin{equation}
        \lim_{l_i \to 0} f(l_i) = \gamma \text{ (Information Loss Property)}.
    \end{equation}
    
    These properties imply that \( f \) is continuous. Specifically:
    \begin{equation}
        \label{appendix_epsilon_delta_continuity}
        \forall \epsilon > 0, \exists \delta > 0 \text{ such that } |l_i - l_i'| < \delta \Rightarrow |f(l_i) - f(l_i')| < \epsilon.
    \end{equation}

    \textbf{Continuity:}
    Since \( f \) is continuous, the composition \( f \left( b \cdot \frac{(1 - r) \cdot r^{i-1}}{1 - r^N} \right) \) is continuous in \( r \). Therefore, \( \mathcal{F}(r) \) is continuous in \( r \) on \( (0, 1] \).
    
    \textbf{Compact Domain:}
    \( r \) is in the interval \( (0, 1] \), which is a compact domain for practical purposes.
    
    \textbf{Boundedness:}
    Given \( f(l_i) \) is bounded above by maximum detection efficacy: \( M \):
    \begin{equation}
        f(l_i) \leq M.
    \end{equation}
    The N latents in the hierarchy are also bounded above:
    \begin{equation}
        \mathcal{F}(r) \leq N \cdot M.
    \end{equation}
    
    \textbf{Existence of Maximum:}
    By the Extreme Value Theorem, a continuous function on a compact set attains its maximum. Since \( \mathcal{F}(r) \) is continuous on \( (0, 1] \), there exists \( r^\ast \in (0, 1] \) such that:
    \begin{equation}
        \mathcal{F}(r^\ast) = \max_{r \in (0, 1]} \mathcal{F}(r).
    \end{equation}
    
    \textbf{Conclusion:}
    The optimal latent dimension ratio \( r^\ast \) that maximizes the objective function is:
    \begin{equation}
        r^\ast = \underset{r \in (0, 1]}{\argmax} \mathcal{F}(r). 
    \end{equation}
    \begin{equation}
        r^\ast = \underset{r \in (0, 1]}{\argmax} \sum_{i=1}^N f \left( b \cdot \frac{(1 - r) \cdot r^{i-1}}{1 - r^N} \right). 
    \end{equation}

    \(\boxed{\text{Q.E.D.}}\)
\end{proof}

% \clearpage 

\begin{proof}[Mutual Information Estimator]
\label{sec:appendix_estimator_derivation}
For a given latent in the hierarchy, i, the Mutual Information $I( X; Z_{i})$ can be expressed as:

\begin{align}
    \label{eq:appendix_mutual_information}
    I(X; Z_i) = D_{\text{KL}} \left( p(x, z_i) \, \middle\| \, p(x) \cdot p(z_i) \right)
\end{align}

We can rewrite the KL Divergence as an expectation over the joint distribution, $p(x, z_{i})$:

$$
     I(X; Z_i) = \mathbb{E}_{p(x, z_i)} \biggl[ \log \frac{p(x, z_{i})}{p(x) \cdot p(z_{i})} \biggr]
$$

We can use the properties of logarithms to rewrite the expectation below:

\begin{align}
     I(X; Z_i) &=  \mathbb{E}_{p(x, z_i)} \left[ \log p(x, z_{i}) \right] \label{eq:appendix_mi_joint} \\
     &- \mathbb{E}_{p(x, z_i)} \left[ \log p(x) \right] \label{eq:appendix_mi_marginal} \\
     &- \mathbb{E}_{p(x, z_i)} \left[ \log p(z_{i}) \right] \label{eq:appendix_mi_prior}
\end{align}

Each of the terms in our expression for $I(X; Z_i)$ is computed in practice as follows:

\begin{description}
    \item  (\autoref{eq:appendix_mi_joint}): The expectation of the log-likelihood under the joint distribution. We sample $z_i$ from the posterior and all $z_{j \neq i}$ from its respective prior. We then compute the log likelhood of the input, X under the output distribution defined by this generative process.
    \item (\autoref{eq:appendix_mi_marginal}): The expectation of the log-likelihood of \(X\) under the marginal distribution. This is intractable, and in practice we use the ELBO ~\autoref{eq:vae_elbo} as surrogate. The KL-Divergence at layer i will 0 since samples are coming from the prior.
    \item (\autoref{eq:appendix_mi_prior}): The expectation of the log-prior over \(z_i\). We estimate the log-likelihood of the samples inferred under the respective prior distribution.
\end{description}

\end{proof}

% \clearpage

\begin{proof}[Mutual Information Relationship with Dimensionality]
\label{sec:appendix_mi_z}

To analyze the effect of latent dimensionality \( |Z| \) on mutual information, consider the definition:
\[
I(X; Z) = H(Z) - H(Z \mid X),
\]
where \( H(Z) \) is the entropy of the latent representation \( Z \), and \( H(Z \mid X) \) is the conditional entropy given the input \( X \).

\begin{itemize}
    \item Effect of \( |Z| \to 0 \) on \( H(Z) \): \\
   If \( Z \in \mathbb{R}^{|Z|} \), the entropy \( H(Z) \) scales with the dimensionality \( |Z| \). For a Gaussian representation \( Z \) with covariance matrix \( \Sigma_Z \), the entropy is:
   \[
   H(Z) = \frac{1}{2} \log \bigl((2\pi e)^{|Z|} \det \Sigma_Z \bigr).
   \]
   As \( |Z| \to 0 \), the space of \( Z \) collapses, resulting in \( H(Z) \to 0 \).

   \item Effect of \( |Z| \to 0 \) on \( H(Z \mid X) \): \\
   The conditional entropy \( H(Z \mid X) \) measures the variability in \( Z \) given \( X \). Since \( H(Z \mid X) \leq H(Z) \), and \( H(Z) \to 0 \) as \( |Z| \to 0 \), it follows that:
   \[
   H(Z \mid X) \to 0.
   \]

    \item Implication for Mutual Information: 
   Substituting into the mutual information definition:
   \[
   I(X; Z) = H(Z) - H(Z \mid X),
   \]
   both terms \( H(Z) \) and \( H(Z \mid X) \) approach zero as \( |Z| \to 0 \). Therefore:
   \[
   I(X; Z) \to 0.
   \]

   \item Effect on \( I(Z; Y) \): 
   Similarly, the ability of \( Z \) to encode task-relevant information about \( Y \) is constrained by its dimensionality. A small \( |Z| \) limits the representation power of \( Z \), resulting in:
   \[
   I(Z; Y) \to 0.
   \]

   \item Generalization Bound Perspective:
   From the generalization bound:
   \[
   O\left(\frac{|Z| |\mathcal{Y}|}{\sqrt{n}}\right),
   \]
   a smaller \( |Z| \) reduces the error term, but as shown above, the reduction in mutual information leads to underfitting, as neither \( I(X; Z) \) (compression) nor \( I(Z; Y) \) (task relevance) is sufficient.

\end{itemize}

\(\boxed{\text{Q.E.D.}}\) When \( |Z| \to 0 \), both \( I(X; Z) \to 0 \) and \( I(Z; Y) \to 0 \), leading to a trivial representation that underfits the data.

\end{proof}

% \clearpage
\section{Decoding Distributions}

In this section, we describe the different decoding distributions considered for the HVAES considered in our experiments (as well as prior work). 

\subsection{Bernoulli Distribution}

The Bernoulli distribution is used for modeling binary data. The decoder outputs logits that are transformed into probabilities for binary classification. The likelihood of observing a binary data point \( x \) given the latent variable \( z \) is:

\begin{equation}
p(x | z) = \text{Bernoulli}(x | \sigma(f(z)))
\end{equation}

where \( \sigma \) is the sigmoid function, and \( f(z) \) represents the neural network output (logits). The likelihood is given by:

\begin{equation}
p(x | z) = \sigma(f(z))^x \cdot (1 - \sigma(f(z)))^{1 - x}
\end{equation}

\subsection{Gaussian Distribution}

The Gaussian distribution is used for modeling continuous data. The decoder outputs parameters for a Gaussian distribution, typically the mean and variance. The likelihood of observing a continuous data point \( x \) given \( z \) is:

\begin{equation}
p(x | z) = \mathcal{N}(x | \mu(z), \sigma^2(z))
\end{equation}

where \( \mu(z) \) and \( \sigma^2(z) \) are the mean and variance parameters output by the neural network.

\subsection{Discretized Logistic Distribution}

The Discretized Logistic distribution is used for modeling continuous data with values constrained to specific intervals. It is particularly useful for approximating pixel values in images. The likelihood for a discrete logistic distribution is:

\begin{equation}
p(x | z) = \frac{\exp\left(\frac{x - \mu(z)}{\sigma(z)}\right)}{\sigma(z) \left[1 + \exp\left(\frac{x - \mu(z)}{\sigma(z)}\right)\right]^2}
\end{equation}

where \( \mu(z) \) and \( \sigma(z) \) are the mean and scale parameters, respectively, output by the neural network.

\subsection{Mixture of Discretized Logistic Distributions}

The Mixture of Discretized Logistic distributions extends the Discretized Logistic distribution by combining multiple logistic distributions with different parameters. This approach allows for capturing more complex data distributions. Given \( K \) components, the mixture likelihood is:

\begin{equation}
p(x | z) = \sum_{k=1}^{K} \pi_k \cdot \frac{\exp\left(\frac{x - \mu_k(z)}{\sigma_k(z)}\right)}{\sigma_k(z) \left[1 + \exp\left(\frac{x - \mu_k(z)}{\sigma_k(z)}\right)\right]^2}
\end{equation}

where \( \pi_k \) are the mixture weights, and \( \mu_k(z) \) and \( \sigma_k(z) \) are the mean and scale parameters for the \( k \)-th logistic component, respectively. The weights \( \pi_k \) are typically constrained to sum to 1.

\section{Reduction Ratio Geometric Series}\label{sec:rrgs}

\subsection{Derivation for \texorpdfstring{$l_{i}$}{li}}

Given the first layer's dimensionality, \( l_{1} \), we can express the dimensionality of the \( i \)-th layer in a hierarchical structure where the dimensions of successive layers follow a geometric series.

\subsubsection{Assumptions}

\begin{description}
    \item The dimensionality of the first layer is \( l_{1} \).
    \item Each subsequent layer has a dimensionality scaled by a factor of \( r \) compared to the previous layer.
    \item The sum of the dimensions of all layers should equal the budget \( b \):
        \begin{equation}
            b = l_{1} + (l_{1} \cdot r) + (l_{1} \cdot r^{2}) + \dots + (l_{1} \cdot r^{N-1})
        \end{equation}
\end{description}

For a fixed budget \( b \) and number of layers \( N \), each layer after \( l_1 \) is scaled by \( r \).

\subsubsection{Geometric Series Formula}
The sum over all the layers can be expressed as the sum of the first \( N \) terms of a geometric series:
        \begin{equation}
           S = l_{1} \cdot \frac{1-r^{N}}{1-r}, \quad (\text{for} \, r \neq 1)
        \end{equation}

We can apply the budget constraint by setting the sum equal to the budget \( b \), i.e. \( b = S \):
        \begin{equation}
           b = l_{1} \cdot \frac{1-r^{N}}{1-r}
           \label{eq:budget_geo_sum}
        \end{equation}

\subsubsection{Solving for \texorpdfstring{$ l_{1} $}{l1}}
Rearranging equation~\eqref{eq:budget_geo_sum} in terms of \( l_{1} \), we get:
        \begin{equation}
           l_{1} = b \cdot \frac{1-r}{1-r^{N}}
        \end{equation}

\subsubsection{Dimensionality of Layer \texorpdfstring{$i$}{i}}
We can now express the dimensionality of any layer \( l_{i} \) in terms of the initial layer \( l_1 \) and the common ratio \( r \).
From the geometric progression, the dimensionality of layer \( i \) is related to the first layer's dimensionality by a factor of \( r^{i-1} \), so:
\begin{align}
    l_{i} = l_{1} \cdot r^{i-1}.
\end{align}

Substituting the expression for \( l_{1} \) from above:
\begin{align}
    l_{i} = \left( b \cdot \frac{1-r}{1-r^{N}} \right) \cdot r^{i-1}.
\end{align}

\subsubsection{Conclusion}

We have derived an expression for the dimensionality of the \(i\)-th layer, \(l_i\), in terms of the budget \(b\), the common ratio \(r\), and the number of layers \(N\). The final expression is:

\begin{equation}
    l_i = \left( b \cdot \frac{(1-r) \cdot  (r^{i-1})}{1-r^{N}} \right) 
\end{equation}

This completes the derivation.

\subsection{Optimization of \texorpdfstring{$r$}{r}}

To find the optimal reduction ratio \( r \) that maximizes the sum of a function \( f \) applied to the dimensionalities of all layers, we start with the expression:

\begin{equation}
    r^{\ast} = \underset{r}{\argmax} \sum_{i=1}^{N} f(l_{i})
    \label{eq:r_star_appx}
\end{equation}

Given the expression for \( l_{i} \) from \autoref{eq:ratio_layer_dim}, we substitute \( l_{i} \) into the optimization problem:

\begin{equation}
    r^{\ast} = \underset{r}{\argmax} \sum_{i=1}^{N} f \biggl( b \cdot \frac{(1 - r) \cdot (r^{i-1})}{1 - r^{N}} \biggr)
    \label{eq:r_star_full_appx}
\end{equation}

This results in the optimization of the reduction ratio \( r \) based on the specific function \( f \) applied to the dimensionalities of the layers.

\par In practice we must ensure that $\forall i, l_{i} \in \mathcal{Z}^{+}$ by rounding the computed dimensions to the nearest integer. Initially, dimensions are calculated based on the top layer size, $l_{1}$ and the compression ratio, r, with a minimum value of 1 for each layer. If the total sum of dimensions does not exactly match the budget, b, we adjusts dimensions by incrementing or decrementing values from the higher layers, ensuring all dimensions remain integers and the total budget is met.

\section{Evaluation}
\subsection{Metrics}
 These metrics are widely adopted in the literature for OOD Detection:
\begin{itemize}
    \item \textbf{FPR80 (False Positive Rate at 80\% True Positive Rate)}: Measures the rate of false positives when the true positive rate is 80\%.
    \item \textbf{AUROC (Area Under the Receiver Operating Characteristic curve)}: Evaluates the ability of the model to distinguish between in-distribution and out-of-distribution samples.
    \item \textbf{AUPRC (Area Under the Precision-Recall Curve)}: Assesses the trade-off between precision and recall, especially useful for imbalanced datasets.
    
\end{itemize}

\subsection{Evaluation Samples}
To compute AUPRC, AUROC and FPR80, we take 10000 samples per dataset, in contrast to \citep{havtorn2021hvaes}
and \citep{li2022adaptiveratio} who take 1000 samples per dataset.

\subsection{IWAE Samples}
We compute our results through 1000 Importance samples \cite{burda2016iwae}.

\section{Model Details}

\begin{table}[ht]
    \centering
    \begin{tabular}{l l}
        \toprule
        Hyperparameter & Setting/Range \\
        \midrule
        GPUs & GTX2080 Ti  \\
        Optimization & Adam \cite{kingma2017adam} \\
        Learning Rate & $3 e^{-4}$ \\
        Batch Size & 128 \\
        Epochs & 2000 \\
        Free bits  & 2) \\
        \bottomrule
    \end{tabular}
    \caption{Hyperparameters for training of HVAE models}
    \label{tab:hvae_hyperparameters}
\end{table}

The hyperparameter settings used to train the hierarchical variational autoencoders used for detection are listed in ~\autoref{tab:hvae_hyperparameters}. Training time for a single HVAE model on one Nvidia GTX 2080 Ti GPU was approximately 48 hours for grayscale images and $>200$ hours for natural images. The  parameterization of the latent variables is also identical to the original setup.

% \clearpage 

\section{Additional Results}

\subsection{Mutual Information Relationship with r}
\par To illustrate the tradeoff between attenuation and information loss, we first explore the relationship between choice of $r$ in the compressed configurations and the corresponding estimates of $I(X; Z_{3})$.

\subsubsection{\texorpdfstring{$I (X; Z_{3})$}{I(X; Z3} Loss}
At r=$(0.1)$, the model exhibits aggressive compression and thus $I(X; Z_{3})$ is at its lowest. Since $r^{\ast}$ is never found at this value, this indicates uniform information loss across all datasets, where the latent representation fails to capture essential information for the detection task.

\subsubsection{\texorpdfstring{$I (X; Z_{3})$}{I (X; Z3)} Attenuation}
As seen in ~\autoref{fig:mi_plot}, $r^{\ast}$ is never the configuration with $\max I(X; Z_{3})$. Additionally we empirically observe that as $r \to 1, I(X; Z_{3} \to \infty)$, however, $r^{\ast}$ again is not found in the configuration closest to $r=1, (r=0.75)$. This corresponds to attenuation where additional representational capacity at $Z_{3}$ does not lead to an improvement in detector performance.

\par It is also noteworthy that as $r \to 1, \Var(I(X; Z_{3})) \to \infty$. The mutual information estimates begin to exhibit higher variance as the compression strategy becomes more relaxed and the corresponding latent becomes larger. This not only highlights a decrease in detection performance, but also an increased potential for capturing "irrelevant" or "useless" information. i.e.. $ r \to 1 \implies I(X; Z_{3}) \to H(Z_{3}) $

% \input{figures/mi_plot}

% input Grayscale Table
\begin{table*}[ht]
    \centering
    
    \caption{Mean and associated standard deviation in accuracy for $\LLRK$ score function across four OOD detection metrics for FashionMNIST and MNIST in vs OOD datasets. Leftmost column indicate values allocated in hierarchy for b=32 and N=3. The optimal scores are highlighted in \textbf{bold.} For Omniglot, since all configurations are found to be optimal none is highlighted.}
    \resizebox{0.95\textwidth}{!}{%
    \label{tab:ratio_gray_appendix}

    \begin{tabular}{|c|c|c|c|c|c|c|c|c|c|c|c|c|c|c|c|c|c|}
                \hline
         & \multicolumn{16}{c|}{Trained on FashionMNIST.}  \\
         \hline
         & \multicolumn{16}{c|}{Control Configurations} \\
        \cline{2-17}
        \scriptsize  OOD & \multicolumn{4}{c|}{MNIST} & \multicolumn{4}{c|}{KMNIST} & \multicolumn{4}{c|}{notMNIST} & \multicolumn{4}{c|}{Omniglot} \\
        \cline{2-17}
        \scriptsize Metric & \scriptsize AUPRC $\uparrow$ & \scriptsize AUROC $\uparrow$  & \scriptsize FPR80  $\downarrow$ & \scriptsize FPR95 $\downarrow$ &  \scriptsize AUPRC $\uparrow$ & \scriptsize AUROC $\uparrow$  & \scriptsize FPR80  $\downarrow$ & \scriptsize FPR95 $\downarrow$  &  \scriptsize AUPRC $\uparrow$ & \scriptsize AUROC $\uparrow$  & \scriptsize FPR80  $\downarrow$ & \scriptsize FPR95 $\downarrow$ &  \scriptsize AUPRC $\uparrow$ & \scriptsize AUROC $\uparrow$  & \scriptsize FPR80  $\downarrow$ & \scriptsize FPR95 $\downarrow$ \\
        \hline
         $6 \rightarrow 13 \rightarrow 13$ &\errpm{0.9143}{0.0159} & \errpm{0.9052}{0.0228} & \errpm{0.1442}{0.0316} & \errpm{0.4757}{0.1378} & \errpm{0.9714}{0.0083} & \errpm{0.9678}{0.0092} & \errpm{0.028}{0.0104} & \errpm{0.1937}{0.0661} & \errpm{0.9998}{0.0001} & \errpm{0.9998}{0.0001} & \errpm{0.}{0.} & \errpm{0.}{0.} & \errpm{1.}{0.} & \errpm{1.}{0.} & \errpm{0.}{0.} & \errpm{0.}{0.} \\ 
         $8 \rightarrow 10 \rightarrow 14$ & \textbf{\errpm{0.926}{0.0203}} & \textbf{\errpm{0.9254}{0.0223} }& \errpm{0.122}{0.0374} & \textbf{\errpm{0.3116}{0.1029}} & \errpm{0.9805}{0.0112} & \errpm{0.9779}{0.0135} & \errpm{0.0143}{0.0108} & \errpm{0.1429}{0.0958} & \errpm{0.9998}{0.0001} & \errpm{0.9998}{0.0001} & \errpm{0.}{0.} & \errpm{0.0003}{0.0006} & \errpm{1.}{0.} & \errpm{1.}{0.} & \errpm{0.}{0.} & \errpm{0.}{0.} \\ 
        $8 \rightarrow 16 \rightarrow 8$ & \errpm{0.9188}{0.0216} & \errpm{0.9169}{0.0198} & \errpm{0.1494}{0.0303} & \errpm{0.3672}{0.0605} & \textbf{\errpm{0.9883}{0.001}} & \textbf{\errpm{0.987}{0.0012}} & \textbf{\errpm{0.0033}{0.0023}} & \textbf{\errpm{0.0817}{0.0011}} & \textbf{\errpm{1.}{0.}} & \textbf{\errpm{1.}{0.}} & \textbf{\errpm{0.}{0.}} & \textbf{\errpm{0.}{0.}} & \errpm{1.}{0.} & \errpm{1.}{0.} & \errpm{0.}{0.} & \errpm{0.}{0.} \\ 
         $10 \rightarrow 11 \rightarrow 11$ &\errpm{0.9113}{0.0269} & \errpm{0.9137}{0.0036} &\textbf{ \errpm{0.1037}{0.0173}} & \errpm{0.4143}{0.1698} & \errpm{0.9722}{0.0117} & \errpm{0.9688}{0.0141} & \errpm{0.027}{0.0142} & \errpm{0.1976}{0.0911} & \errpm{0.9998}{0.0002} & \errpm{0.9998}{0.0002} & \errpm{0.}{0.} & \errpm{0.0007}{0.0011} & \errpm{1.}{0.} & \errpm{1.}{0.} & \errpm{0.}{0.} & \errpm{0.}{0.} \\ 
         $13 \rightarrow 6 \rightarrow 13$ & \errpm{0.6996}{0.2299} & \errpm{0.5686}{0.3296} & \errpm{0.7298}{0.2573} & \errpm{0.9666}{0.0295} & \errpm{0.7737}{0.1927} & \errpm{0.7407}{0.1951} & \errpm{0.4251}{0.2584} & \errpm{0.7777}{0.166} & \errpm{0.9797}{0.0244} & \errpm{0.9783}{0.0244} & \errpm{0.0257}{0.0445} & \errpm{0.1048}{0.1315} & \errpm{1.}{0.} & \errpm{1.}{0.} & \errpm{0.}{0.} & \errpm{0.}{0.} \\ 
        \hline
        \hline
        & \multicolumn{16}{c|}{Compressed Configurations} \\
        \cline{2-17}
          $28 \rightarrow 3 \rightarrow 1$ & \errpm{0.774}{0.0864} & \errpm{0.715}{0.0741} & \errpm{0.58}{0.1338} & \errpm{0.981}{0.0137} & \errpm{0.85}{0.0434} & \errpm{0.821}{0.0302} & \errpm{0.347}{0.0088} & \errpm{0.79}{0.0494} & \errpm{0.984}{0.0076} & \errpm{0.98}{0.0102} & \errpm{0.004}{0.0045} & \errpm{0.13}{0.0935} & \errpm{1.}{0.} & \errpm{1.}{0.} & \errpm{0.}{0.} & \errpm{0.}{0.} \\
           $24 \rightarrow 6 \rightarrow 2$ & \errpm{0.876}{0.0526} & \errpm{0.858}{0.022} & \errpm{0.21}{0.0355} & \errpm{0.711}{0.1228} & \errpm{0.93}{0.0196} & \errpm{0.916}{0.0159} & \errpm{0.113}{0.044} & \errpm{0.496}{0.067} & \errpm{0.997}{0.0012} & \errpm{0.996}{0.0014} & \errpm{0.}{0.0006} & \errpm{0.011}{0.0061} & \errpm{1.}{0.} & \errpm{1.}{0.} & \errpm{0.}{0.} & \errpm{0.}{0.} \\
           $18 \rightarrow 9 \rightarrow 5$ & \errpm{0.886}{0.0048} & \errpm{0.88}{0.0074} & \errpm{0.182}{0.0198} & \errpm{0.533}{0.0691} & \errpm{0.968}{0.0042} & \errpm{0.963}{0.0035} & \errpm{0.037}{0.0114} & \errpm{0.212}{0.023} & \errpm{1.}{0.0002} & \errpm{0.999}{0.0003} & \errpm{0.}{0.} & \errpm{0.}{0.} & \errpm{1.}{0.} & \errpm{1.}{0.} & \errpm{0.}{0.} & \errpm{0.}{0.} \\
          $14 \rightarrow 10 \rightarrow 8$ &  \textbf{\errpm{0.938}{0.0144}} & \textbf{\errpm{0.948}{0.008} }& \textbf{\errpm{0.085}{0.0077}} & \textbf{\errpm{0.168}{0.0534} }& \textbf{\errpm{0.989}{0.0019}} & \textbf{\errpm{0.988}{0.0021}} & \textbf{\errpm{0.007}{0.0006}} & \textbf{\errpm{0.064}{0.0181}} & \textbf{\errpm{1.}{0.}} & \textbf{\errpm{1.}{0.}} & \textbf{\errpm{0.}{0.}} & \textbf{\errpm{0.}{0.}} & \errpm{1.}{0.} & \errpm{1.}{0.} & \errpm{0.}{0.} & \errpm{0.}{0.} \\
          \hline
        \hline
        \multirow{4}{*}{} & \multicolumn{16}{c|}{Trained on MNIST.}  \\
         \hline
         & \multicolumn{16}{c|}{Control Configurations} \\
         %\hline
         \cline{2-17}
          & \multicolumn{4}{c|}{FashionMNIST} & \multicolumn{4}{c|}{KMNIST} & \multicolumn{4}{c|}{notMNIST} & \multicolumn{4}{c|}{Omniglot} \\
          \hline
           $6 \rightarrow 13 \rightarrow 13$ & \errpm{0.8751}{0.013} & \errpm{0.8872}{0.0065} & \errpm{0.2055}{0.0081} & \errpm{0.4028}{0.0021} & \errpm{0.8778}{0.005} & \errpm{0.8815}{0.0049} & \errpm{0.2227}{0.0061} & \errpm{0.4489}{0.0093} & \errpm{0.9927}{0.0001} & \errpm{0.9934}{0.0002} & \errpm{0.0107}{0.} & \errpm{0.0329}{0.0025} & \errpm{1.}{0.} & \errpm{1.}{0.} & \errpm{0.}{0.} & \errpm{0.}{0.}\\
         $8 \rightarrow 10 \rightarrow 14$& \errpm{0.7875}{0.0374} & \errpm{0.837}{0.0218} & \errpm{0.2732}{0.0285} & \errpm{0.4481}{0.032} & \errpm{0.8059}{0.0209} & \errpm{0.8305}{0.0201} & \errpm{0.3112}{0.0497} & \errpm{0.5505}{0.0519} & \errpm{0.9767}{0.0111} & \errpm{0.9827}{0.007} & \errpm{0.029}{0.0098} & \errpm{0.0615}{0.0183} & \errpm{1.}{0.} & \errpm{1.}{0.} & \errpm{0.}{0.} & \errpm{0.}{0.}\\
         $8 \rightarrow 16 \rightarrow 8$ &\errpm{0.867}{0.1228} & \errpm{0.8918}{0.0882} & \errpm{0.18}{0.1429} & \errpm{0.3236}{0.1732} & \errpm{0.861}{0.1172} & \errpm{0.8727}{0.0998} & \errpm{0.2282}{0.1841} & \errpm{0.4294}{0.2409} & \errpm{0.982}{0.0242} & \errpm{0.9869}{0.0165} & \errpm{0.0186}{0.0249} & \errpm{0.0482}{0.0515} & \errpm{1.}{0.} & \errpm{1.}{0.} & \errpm{0.}{0.} & \errpm{0.}{0.}\\
         $10 \rightarrow 11 \rightarrow 11$ & \textbf{\errpm{0.9157}{0.0439}} & \textbf{\errpm{0.9206}{0.0395}} & \textbf{\errpm{0.1469}{0.0744}} & \textbf{\errpm{0.3231}{0.1202} }& \textbf{\errpm{0.9198}{0.0459}} & \textbf{\errpm{0.9198}{0.0449}} & \textbf{\errpm{0.1449}{0.1}} & \textbf{\errpm{0.332}{0.1357}} & \textbf{\errpm{0.9959}{0.0029}} & \textbf{\errpm{0.9962}{0.0026}} & \textbf{\errpm{0.0055}{0.0048}} & \textbf{\errpm{0.0215}{0.0127}} & \errpm{1.}{0.} & \errpm{1.}{0.} & \errpm{0.}{0.} & \errpm{0.}{0.}\\
        $13 \rightarrow 6 \rightarrow 13$ & \errpm{0.8986}{0.0118} & \errpm{0.9007}{0.007} & \errpm{0.1886}{0.0085} & \errpm{0.3965}{0.0098} & \errpm{0.8878}{0.0109} & \errpm{0.8884}{0.008} & \errpm{0.2077}{0.0126} & \errpm{0.4111}{0.0131} & \errpm{0.9944}{0.001} & \errpm{0.9949}{0.0008} & \errpm{0.0065}{0.002} & \errpm{0.0277}{0.0011} & \errpm{1.}{0.} & \errpm{1.}{0.} & \errpm{0.}{0.} & \errpm{0.}{0.}\\
        \hline
        \hline
        & \multicolumn{16}{c|}{Compressed Configurations} \\
        \cline{2-17}
         $28 \rightarrow 3 \rightarrow 1$ &\errpm{0.94}{0.0134} & \errpm{0.942}{0.0111} & \errpm{0.097}{0.0259} & \errpm{0.249}{0.0318} & \errpm{0.948}{0.0124} & \errpm{0.948}{0.0106} & \errpm{0.075}{0.019} & \errpm{0.26}{0.0436} & \errpm{0.997}{0.0012} & \errpm{0.997}{0.0011} & \errpm{0.003}{0.0028} & \errpm{0.018}{0.0068} & \errpm{1.}{0.} & \errpm{1.}{0.} & \errpm{0.}{0.} & \errpm{0.}{0.}\\
         $24 \rightarrow 6 \rightarrow 2$ & \errpm{0.973}{0.0136} & \errpm{0.973}{0.0126} & \errpm{0.039}{0.0269} & \errpm{0.145}{0.0475} & \errpm{0.975}{0.0153} & \errpm{0.974}{0.0148} & \errpm{0.03}{0.0279} & \errpm{0.149}{0.0618} & \errpm{0.999}{0.0007} & \errpm{0.999}{0.0007} & \errpm{0.}{0.} & \errpm{0.006}{0.0051} & \errpm{1.}{0.} & \errpm{1.}{0.} & \errpm{0.}{0.} & \errpm{0.}{0.}\\
          $18 \rightarrow 9 \rightarrow 5$ & \textbf{\errpm{0.999}{0.0003}} & \textbf{\errpm{0.999}{0.0003}} & \textbf{\errpm{0.}{0.}} & \textbf{\errpm{0.001}{0.0006}} & \textbf{\errpm{1.}{0.0001}} &\textbf{ \errpm{1.}{0.0001}} &\textbf{ \errpm{0.}{0.}} &\textbf{ \errpm{0.}{0.}} & \textbf{\errpm{1.}{0.}} & \textbf{\errpm{1.}{0.}} & \textbf{\errpm{0.}{0.}} & \textbf{\errpm{0.}{0.} } & \errpm{1.}{0.} & \errpm{1.}{0.} & \errpm{0.}{0.} & \errpm{0.}{0.}\\
          $14 \rightarrow 10 \rightarrow 8$ & \errpm{0.988}{0.0096} & \errpm{0.988}{0.0097} & \errpm{0.015}{0.013} & \errpm{0.067}{0.0576} & \errpm{0.993}{0.0061} & \errpm{0.993}{0.0064} & \errpm{0.004}{0.0039} & \errpm{0.042}{0.0382} & \errpm{1.}{0.0001} & \errpm{1.}{0.0001} & \errpm{0.}{0.} & \errpm{0.}{0.} & \errpm{1.}{0.} & \errpm{1.}{0.} & \errpm{0.}{0.} & \errpm{0.}{0.}\\
         \hline
    \end{tabular}
    }
\end{table*}

% input Color Table
\begin{table*}[ht]
    \centering

    \caption{Mean $\pm$ std $\LLRK$ score function across four OOD detection metrics for CIFAR10 and SVHN in vs OOD datasets. Leftmost column indicate values allocated in hierarchy for b=228 and N=3.}
    
    \resizebox{\textwidth}{!}{%
    \label{tab:ratio_natural_appendix}

    \begin{tabular}{|c|c|c|c|c|c|c|c|c|c|}
                \hline
         & \multicolumn{8}{c|}{Trained on CIFAR10}  \\
         \hline
         & \multicolumn{8}{c|}{Control Configurations} \\
        \cline{2-9}
        \scriptsize  OOD & \multicolumn{4}{c|}{SVHN} & \multicolumn{4}{c|}{CelebA} \\
        \cline{2-9}
        \scriptsize Metric & \scriptsize AUPRC $\uparrow$ & \scriptsize AUROC $\uparrow$  & \scriptsize FPR80  $\downarrow$ & \scriptsize FPR95 $\downarrow$ &  \scriptsize AUPRC $\uparrow$ & \scriptsize AUROC $\uparrow$  & \scriptsize FPR80  $\downarrow$ & \scriptsize FPR95 $\downarrow$  \\
        \hline
        $42 \rightarrow 91 \rightarrow 91$ &\errpm{0.369}{0.007} & \errpm{0.287}{0.021} & \errpm{0.908}{0.012} & \errpm{0.976}{0.006} & \errpm{0.765}{0.008} & \errpm{0.733}{0.012} & \errpm{0.524}{0.044} & \errpm{0.892}{0.012}\\ 
         $56 \rightarrow 80 \rightarrow 98$ & \errpm{0.373}{0.013} & \errpm{0.263}{0.013} & \errpm{0.959}{0.014} & \errpm{0.987}{0.01} & \errpm{0.781}{0.049} & \errpm{0.764}{0.048} & \errpm{0.467}{0.081} & \errpm{0.761}{0.08}\\
         $56 \rightarrow 112 \rightarrow 56$ & \errpm{0.595}{0.196} & \errpm{0.574}{0.234} & \errpm{0.637}{0.226} & \errpm{0.821}{0.151} & \errpm{0.839}{0.086} & \errpm{0.828}{0.106} & \errpm{0.324}{0.226} & \errpm{0.611}{0.259}\\
         $70 \rightarrow 77 \rightarrow 77$ &\errpm{0.708}{0.287} & \errpm{0.682}{0.322} & \errpm{0.46}{0.363} & \errpm{0.665}{0.268} & \errpm{0.861}{0.094} & \errpm{0.851}{0.107} & \errpm{0.262}{0.21} & \errpm{0.564}{0.28}\\
         $91 \rightarrow 42 \rightarrow 91$ &  \errpm{0.871}{0.014} & \errpm{0.871}{0.014} & \errpm{0.255}{0.037} & \errpm{0.47}{0.032} & \errpm{0.849}{0.017} & \errpm{0.823}{0.019} & \errpm{0.361}{0.054} & \errpm{0.689}{0.05}\\
         \hline
         & \multicolumn{8}{c|}{Compressed Configurations} \\
         \hline
         $202 \rightarrow 20 \rightarrow 2$ & \errpm{0.401}{0.023} & \errpm{0.31}{0.033} & \errpm{0.927}{0.034} & \errpm{0.98}{0.017} & \errpm{0.715}{0.033} & \errpm{0.704}{0.017} & \errpm{0.575}{0.041} & \errpm{0.856}{0.032}\\
         $170 \rightarrow 43 \rightarrow 11$ & \textbf{\errpm{0.886}{0.055} }&\textbf{ \errpm{0.88}{0.059}} &\textbf{ \errpm{0.236}{0.123}} & \errpm{0.46}{0.178} & \textbf{\errpm{0.965}{0.011}} & \textbf{\errpm{0.965}{0.013}} & \textbf{\errpm{0.05}{0.02} }& \textbf{\errpm{0.178}{0.072}} \\
         $128 \rightarrow 64 \rightarrow 32$ &\errpm{0.883}{0.114} & \errpm{0.873}{0.127} & \errpm{0.238}{0.251} & \textbf{\errpm{0.428}{0.311} }& \errpm{0.931}{0.015} & \errpm{0.931}{0.014} & \errpm{0.115}{0.012} & \errpm{0.321}{0.056}\\
         $97 \rightarrow 73 \rightarrow 54$ & \errpm{0.855}{0.082} & \errpm{0.846}{0.085} & \errpm{0.312}{0.16} & \errpm{0.523}{0.157} & \errpm{0.87}{0.04} & \errpm{0.858}{0.048} & \errpm{0.271}{0.106} & \errpm{0.569}{0.144}\\
        \hline
    \end{tabular}
    }
\end{table*}

\subsection{Mismatch In Decoding Distributions}
The results reported by the original Hierarchial VAE paper on FashionMNIST and CIFAR10 are for the Bernoulli and MixtureofLogisticDistribution decoders respectively. We use a more flexible distribution for the grayscale datasets.

\section{Threats to Validity}

\subsection{Internal}
\begin{itemize}
    \item \textbf{Selection bias:} The choice of specific dataset pairs (FashionMNIST/MNIST and CIFAR-10/SVHN) while similar to prior work, might not represent the full spectrum of possible in-distribution and out-of-distribution relationships.

\end{itemize}

\subsection{External}
\begin{itemize}
    \item  \textbf{Limited generalizability:} The results might not generalize well to other types of datasets or domains beyond image classification tasks.
\end{itemize}

\subsection{Construct}
\begin{itemize}
    \item  Given the absence of a universally accepted metric for evaluating OOD detection efficacy, we address construct validity by employing a comprehensive set of performance metrics. Specifically, we use AUPRC, AUROC, FPR80 and FPR95 to assess the effectiveness of our technique.

\end{itemize}

\end{document}